\theoremstyle{plain}
\newtheorem{theorem}{Theorem}[section]
\newtheorem{corollary}[theorem]{Corollary}
\theoremstyle{definition}
\newtheorem{definition}[theorem]{Definition}
\theoremstyle{remark}
\newtheorem{remark}[theorem]{Remark}
\def\$#1\${\begin{align*}#1\end{align*}}
\definecolor{best}{HTML}{BAFFCD}
\definecolor{issue}{HTML}{FFC8BA}
\definecolor{bad}{HTML}{FFC87C}
\def\eqref#1{equation~\ref{#1}}
\def\1{\bm{1}}
\def\0{\bm{0}}
\DeclareMathAlphabet{\mathsfit}{\encodingdefault}{\sfdefault}{m}{sl}
\SetMathAlphabet{\mathsfit}{bold}{\encodingdefault}{\sfdefault}{bx}{n}
\definecolor{grey}{rgb}{0.33, 0.33, 0.33}
\newcommand{\squishlist}{
\begin{list}{{{\small{$\bullet$}}}}
{\setlength{\itemsep}{1pt}      \setlength{\parsep}{0pt}
\setlength{\topsep}{-2pt}       \setlength{\partopsep}{0pt}
\setlength{\leftmargin}{1em} \setlength{\labelwidth}{1em}
\setlength{\labelsep}{0.5em} } }
\newcommand{\squishend}{  \end{list}  }
\renewcommand*\env@matrix[1][*\c@MaxMatrixCols c]{%
  \hskip -\arraycolsep
  \let\@ifnextchar\new@ifnextchar
  \array{#1}}
\def\tr{\mathop{\text{tr}}\kern.2ex}
\long\def\comment#1{}
\def\tr{\mathop{\text{Tr}}}
\newcommand{\bel}{\begin{eqnarray}\label}
\newcommand{\eel}{\end{eqnarray}}
\newcommand{\bes}{\begin{eqnarray*}}
\newcommand{\ees}{\end{eqnarray*}}
\newcommand{\clar}[1]{\textbf{\color{green}(NEED CLARIFICATION: #1)}}
\newcommand{\response}[1]{\textbf{\color{magenta}(RESPONSE: #1)}}
\newcommand{\com}[1]{}
\newcommand{\clar}[1]{}
\newcommand{\response}[1]{}
\newcommand{\RNum}[1]{\uppercase\expandafter{\romannumeral #1\relax}}
\title{Dynamics-Aware Loss for Learning with Label Noise}
\author{Xiu-Chuan Li$^{1}$ \ \ \ \ \ Xiaobo Xia$^{2}$ \ \ \ \ \ Fei Zhu$^{1}$ \ \ \ \ \ Tongliang Liu$^{2}$ \ \ \ \ \ Xu-Yao Zhang$^{1}$ \ \ \ \ \ Cheng-Lin Liu$^{1}$ \\
$^1$Institute of Automation, Chinese Academy of Science \quad\quad
$^2$The University of Sydney \\
\texttt{lixiuchuan2020@ia.ac.cn} \ \ \ \ \ 
\texttt{xxia5420@uni.sydney.edu.au} \ \\
\texttt{zhufei2018@ia.ac.cn} \ \ \ \ \  
\texttt{tongliang.liu@sydney.edu.au} \ \\
\texttt{xyz@nlpr.ia.ac.cn} \ \ \ \ \  
\texttt{liucl@nlpr.ia.ac.cn} \ \\
}
\begin{document}

\maketitle

\newcommand{\myPara}[1]{\vspace{.05in}\noindent\textbf{#1}}

\begin{abstract}
	Label noise poses a serious threat to deep neural networks (DNNs). Employing robust loss functions which reconcile fitting ability with robustness is a simple but effective strategy to handle this problem. However, the widely-used static trade-off between these two factors contradicts the dynamics of DNNs learning with label noise, leading to inferior performance. Therefore, we propose a dynamics-aware loss (DAL) to solve this problem. Considering that DNNs tend to first learn beneficial patterns, then gradually overfit harmful label noise, DAL strengthens the fitting ability initially, then gradually improves robustness. Moreover, at the later stage, to further reduce the negative impact of label noise and combat underfitting simultaneously, we let DNNs put more emphasis on easy examples than hard ones and introduce a bootstrapping term. Both the detailed theoretical analyses and extensive experimental results demonstrate the superiority of our method. Our source code can be found in \textit{https://github.com/XiuchuanLi/DAL}.
\end{abstract}

\section{Introduction}

Deep neural networks (DNNs) have achieved tremendous success in various supervised learning tasks, however, this success relies heavily on correctly annotated large-scale datasets. Collecting a large and perfectly annotated dataset can be both expensive and time-consuming, leading many to opt for cheaper methods like querying search engines, which inevitably introduces label noise. Unfortunately, the work of~\cite{overfit} has empirically proved that DNNs can easily fit an entire training dataset with any ratio of noisy labels, resulting in poor generalization performance eventually. Therefore, developing robust algorithms against label noise for DNNs is of great practical significance.

One effective and scalable solution for handling label noise is to use robust loss functions~\cite{gce, tce, js}. These methods have the advantage of simplicity. Specifically, they require no extra information such as the noise rate or a clean validation set, and incur no additional memory burden or computational cost. Cross entropy (CE) is observed to result in serious overfitting in the presence of label noise because it provides DNNs with strong fitting ability. Meanwhile, although mean absolute error (MAE) is theoretically robust against label noise~\cite{mae}, it suffers from severe underfitting in practice, especially on complex datasets. In light of this, many robust loss functions have been proposed to reconcile fitting ability with robustness, among which the generalized cross entropy (GCE)~\cite{gce} is the most representative method. As shown in Figure~\ref{fig_motivation}, selecting a suitable value for $q \in (0,1)$ such as 0.7, GCE outperforms both CE and MAE by a large margin.

Besides devising robust learning algorithms, some work aimed to delve deep into the behavior of DNNs during the process of learning with label noise.  The work of~\cite{dynamic} observed that while DNNs are capable of memorizing noisy labels perfectly, there are noticeable differences in DNNs' learning status at different stages of the training process. Specifically, DNNs tend to first learn patterns shared by the majority of training examples which benefit generalization, and then gradually overfit label noise which harms generalization. We refer to this as the dynamics of DNNs in the following. Further evidence of this observation is provided by~\cite{dynamic2} which showed that DNNs first learn simple and generalized representations via subspace dimensionality compression, then memorize noisy labels through subspace dimensionality expansion. 

\begin{figure*}[t]
	\centering
	\includegraphics[width= \textwidth]{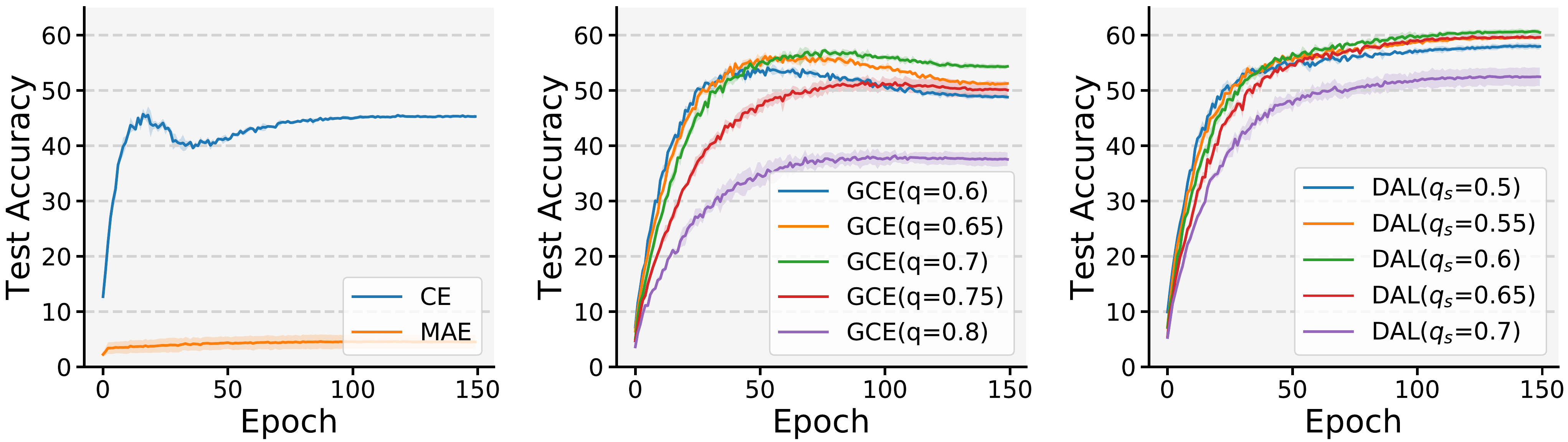}
	\caption{Performance on CIFAR-100 with 40\% instance-dependent noise. Left: CE overfits label noise while MAE suffers from serious underfitting. Middle: GCE with proper $q$ outperforms both CE and MAE by a large margin. Right: The performance of DAL both increases quickly at the early stage and grows steadily afterwards, which is remarkably better than GCE. Moreover, DAL is much less sensitive to its hyper-parameter than GCE.}
	\label{fig_motivation}
\end{figure*}

It is clear from the above views that there exists a discrepancy between the statics of robust loss functions and the dynamics of DNNs learning with label noise, resulting in inferior performance. Specifically, with a static trade-off between fitting ability and robustness, the generalization fails to both improve rapidly at the early stage and maintain steady growth thereafter. As demonstrated in Figure~\ref{fig_motivation}, if GCE provides DNNs with relatively strong fitting ability, e.g. $q \leq 0.7$, the test accuracy rises quickly at the early stage when DNNs tend to learn beneficial patterns, but it experiences a remarkable decline at the later stage, indicating that DNNs overfit label noise. In contrast, if GCE provides DNNs with strong robustness, e.g. $q>0.7$, although the test accuracy grows steadily during the whole training process, it always remains at a relatively low level. Moreover, the performance of GCE is quite sensitive to $q$, which makes it less applicable in the real world.

Fortunately, a loss function with a dynamic trade-off between fitting ability and robustness can solve the above problem. Specifically, we prioritize fitting ability to rapidly increase the classification accuracy at the early stage, then gradually improve robustness to ensure steady performance growth thereafter. Furthermore, to further reduce the negative impact of label noise, we place greater emphasis on easy examples compared to hard ones at the later stage, as easy examples are more likely to be correctly labeled~\cite{coteaching,coteaching+}. However, although strong robustness can prevent overfitting, it may lead to underfitting, especially when the noise rate is high, so we incorporate a bootstrapping term at the later stage that encourages DNNs to disagree with the provided labels on some examples. We refer to the proposed method dynamics-aware loss (DAL) and exhibit its performance in Figure~\ref{fig_motivation}. With $q_s=$0.6, its performance could both rise quickly at the early stage and grow steadily later on, resulting in a substantial improvement compared to GCE. Moreover, DAL is much less sensitive to its hyper-parameter $q_s$ compared to GCE. As shown in Figure~\ref{fig_motivation}, varying $q_s$ in [0.55, 0.65] only results in a performance fluctuation of approximately 1\%.

In summary, our key contributions include
\begin{itemize}
	\item We address the discrepancy between the static robust loss functions and the dynamics of DNNs learning with label noise by proposing the Dynamics-Aware Loss (DAL), which empoloys a dynamic trade-off between fitting ability and robustness. Besides, DAL places greater emphasis on easy examples over hard ones and introduces a bootstrapping term at the later stage of the training process, which mitigate the negative impact of label noise and combat underfitting.
	\item We present detailed analyses to certify the superiority of DAL without any assumption about the label noise model. The model trained with DAL can achieve better generalization theoretically.
	\item Our extensive experiments on various benchmark datasets demonstrate the superior performance and practicality of DAL compared to other robust losses without requiring extra information or resources. We also empirically verify that it is complementary to other robust algorithms for learning with label noise, and that it helps to improve backdoor robustness.
\end{itemize}

The rest of the paper is organized as follows. In Section~\ref{sec_work}, we give a brief review of related work on learning with label noise. In Section~\ref{sec_method}, we introduce our proposed DAL in detail. In Section~\ref{sec_theory}, we provide detailed theoretical analyses. In Section~\ref{sec_experiment}, we provide extensive experimental results. Finally, we conclude the paper in Section~\ref{sec_conclusion}.

\section{Related Work} \label{sec_work}

In this section, we briefly summarize existing work in the field of learning with label noise. For a more detailed discussion please refer to~\cite{survey}.

\subsection{Noise Transition Matrix Estimation}
In theory, the clean class posterior can be inferred by combining the noisy class posterior and the noise transition matrix that reflects the label flipping process. Accurate estimation of the noise transition matrix is crucial for building statistically consistent classifiers, which converge to the optimal classifiers derived from clean data. However, a large estimation error in the noise transition matrix can significantly degrade the classification accuracy, and most methods presuppose high-quality features such as anchor points. To mitigate the above limitations, many studies have focused on reducing the estimation error of the noise transition matrix~\cite{estimation4, estimation5} and reducing the requirement of high-quality features~\cite{anchor, beyond}.

\subsection{Loss Adjustment}
These methods adjust the loss of each training example before back-propagation, which could be further divided into loss reweighting~\cite{meta, reweighting3} and label correction~\cite{correction3, correction4}. The former assigns smaller weights to the potentially incorrect labels while the latter replaces the given label with a combination of itself and the prediction. Loss reweighting is usually realized by meta-learning that learns weights for each sample, which requires some clean data to train the meta DNN. Label correction may overfit wrongly refurbished labels if there are many confusing classes in the training dataset.

\subsection{Sample Selection}
These approaches aim to identify correctly labeled examples from a noisy training dataset. Based on the dynamics of DNNs learning with label noise, small-loss trick which regards small-loss examples as correctly labeled ones is widely used. After selecting correct labels, some approaches~\cite{coteaching, jocor} directly remove wrong labeled examples and train DNNs on the remained data, while others~\cite{dividemix, scanmix} only discard wrong labels but preserve the corresponding instances, then they leverage semi-supervised learning to train DNNs. To reduce the accumulated error caused by incorrect selection, this type of approaches usually maintains multiple DNNs or refines the selected set iteratively, which increases the memory burden and computational cost remarkably.

\subsection{Regularization}
Many regularization methods introduce regularizers into loss functions. To name a few, the work of~\cite{peer} randomly pair an instance and another label to construct a peer sample, then uses a peer regularizer to punish DNNs from overly agreeing with the peer sample. Generalized Jenson-Shannon Divergence (GJS)~\cite{js} introduces a consistency regularizer forcing DNNs to make consistent predictions given two augmented versions of a single input. Early learning regularization (ELR)~\cite{elr} encourages the predictions of DNNs to agree with the exponential moving average of the past predictions. However, it is important to note that these regularization methods come with additional memory burden or computational costs, unlike robust loss functions. There are also other types of regularizations such as contrastive learning~\cite{sscl}, model pruning~\cite{prune}, over-parameterization~\cite{sop}, and so on.

\subsection{Robust Loss Function}
While the commonly used CE easily overfits label noise due to its strong fitting ability, although MAE is theoretically noise-tolerant~\cite{mae}, it suffers from underfitting due to its poor fitting ability. Subsequently, a large amount of work improves generalization by reconciling fitting ability with robustness in different ways. While GCE~\cite{gce} is an interpolation between CE and MAE, symmetric cross entropy (SCE)~\cite{sce} equals a convex combination of CE and MAE, and active passive loss~\cite{apl} just replaces CE in SCE with normalized CE. Taylor cross entropy~\cite{tce} realizes an interpolation between CE and MAE through Taylor Series, while the work of~\cite{js} scales the Jensen-Shannon Divergence to construct the interpolation. Robust loss functions cause no changes to the training process, they require no extra information such as the noise rate or a clean validation set, and incur no additional memory burden or computational cost.While the commonly used CE easily overfits label noise due to its strong fitting ability, although MAE is theoretically noise-tolerant~\cite{mae}, it suffers from underfitting due to its poor fitting ability. Subsequently, a large amount of work improves generalization by reconciling fitting ability with robustness in different ways. While GCE~\cite{gce} is an interpolation between CE and MAE, symmetric cross entropy (SCE)~\cite{sce} equals a convex combination of CE and MAE, and active passive loss~\cite{apl} just replaces CE in SCE with normalized CE. Taylor cross entropy~\cite{tce} realizes an interpolation between CE and MAE through Taylor Series, while the work of~\cite{js} scales the Jensen-Shannon Divergence to construct the interpolation. Robust loss functions cause no changes to the training process, they require no extra information such as the noise rate or a clean validation set, and incur no additional memory burden or computational cost.

\section{Method} \label{sec_method}

\subsection{Preliminary}
\noindent \textbf{Risk minimization} \quad We consider a typical $k$-class classification problem where $k \geq 2$. Denote the feature space by $\mathcal{X} \subset \mathbb{R}^d$ and the class space by $\mathcal{Y} = [k] = \{1,...k\}$. The classifier $\arg \max_{i} f_i(\cdot)$ is a function that maps feature space to class space, where $f: \mathcal{X} \rightarrow \mathcal{C}$, $\mathcal{C} \subseteq [0, 1]^k$, $\forall \mathbf{c} \in \mathcal{C}$, $\mathbf{1}^T\mathbf{c}=1$. In this paper, we consider the common case where $f$ is a DNN with a softmax output layer. For brevity, we call $f$ as the classifier in the following. Without label noise, the training data $\{\mathbf{x}_i, y_i\}_{i=1}^N$ is drawn i.i.d. from distribution $p(\mathbf{x}, y)$ over $\mathcal{X} \times \mathcal{Y}$. Given a loss function $L: \mathcal{C} \times \mathcal{Y} \rightarrow \mathbb{R}_+$ and a classifier $f$, the $L$ risk of $f$ is defined as
\begin{equation}
	R_L(f)  = \mathbb{E}_{p(\mathbf{x},y)} [L(f(\mathbf{x}), y)]  = \mathbb{E}_{p(\mathbf{x})} [\sum_y L(f(\mathbf{x}), y) p(y|\mathbf{x})], \label{eq_rm}
\end{equation}
where $\mathbb{E}$ represents expectation. In the following, we denote $\sum_y L(f(\mathbf{x}), y) p(y|\mathbf{x})$ by $R_L(f(\mathbf{x}))$. Under the risk minimization framework, our objective is to learn $f^*_L = \arg \min_{f} R_L(f)$. In the presence of label noise, we can only access the noisy training data $\{\mathbf{x}_i, \tilde{y}_i\}_{i=1}^N$ drawn i.i.d. from distribution $\tilde{p}(\mathbf{x}, \tilde{y})$. In this case, the $L$ risk of $f$ is defined as
\begin{equation}
	\tilde{R}_L(f) = \mathbb{E}_{\tilde{p}(\mathbf{x},\tilde{y})} [L(f(\mathbf{x}), \tilde{y})] = \mathbb{E}_{p(\mathbf{x})} [\sum_{\tilde{y}} L(f(\mathbf{x}), \tilde{y}) \tilde{p}(\tilde{y}|\mathbf{x})]. \label{eq_nrm}
\end{equation}
Similarly, we denote $\sum_{\tilde{y}} L(f(\mathbf{x}), \tilde{y}) \tilde{p}(\tilde{y}|\mathbf{x})$ by $\tilde{R}_L(f(\mathbf{x}))$ and $\arg \min_{f} \tilde{R}_L(f)$ by $\tilde{f}^*_L$.

\noindent \textbf{Label noise model} \quad The most generic label noise is termed instance-dependent noise, where the noise depends on both features and labels, so there is $\tilde{p}(\mathbf{x},\tilde{y}) = \sum_y \tilde{p}(\tilde{y}|\mathbf{x},y) p(\mathbf{x},y)$. By contrast, asymmetric noise assumes that the noise depends only on labels, i.e. $\tilde{p}(\tilde{y}|\mathbf{x},y) = \tilde{p}(\tilde{y}|y)$. In addition, the most ideal label noise is called symmetric noise, where each true label is flipped into other labels with equal probability, i.e. $\tilde{p}(\tilde{y}|\mathbf{x},y)$ is a constant.

\subsection{Gradient Analysis} \label{sec:gradient}

The widely-used loss functions CE and MAE are defined as
\begin{gather}
	L_{\text{CE}} (f(\mathbf{x}), y) = -\log f_y(\mathbf{x}), \\
	L_{\text{MAE}} (f(\mathbf{x}), y) = 1 - f_y(\mathbf{x}).
\end{gather}
Although the recently proposed robust loss functions vary from each other in formulation, many of them can be seen as interpolations between CE and MAE. For instance, GCE~\cite{gce}, TCE~\cite{tce} and JS~\cite{js} are respectively defined as
\begin{gather}
	L_{\text{GCE}} (f(\mathbf{x}), y) = \frac{1 - f_y^q(\mathbf{x})}{q}, \\
	L_{\text{TCE}}(f(\mathbf{x}),y) = \sum_{i=1}^{t} \frac{(1 - f_y(\mathbf{x}))^i}{i}, \\
	L_{\text{JS}}(f(\mathbf{x}),y) = \frac{\pi_1 D_{\text{KL}} (\mathbf{e}^{(y)} || \mathbf{m}) + (1-\pi_1) D_{\text{KL}} (f(\mathbf{x}) || \mathbf{m})}{-(1-\pi_1) \log(1-\pi_1)},
\end{gather}
where $\mathbf{e}^{(y)}$ denotes a one-hot vector with $\mathbf{e}^{(t)}_j=1$ iff $j=t$, $\mathbf{m} = \pi_1 \mathbf{e}^{(y)} + (1-\pi_1) f(\mathbf{x})$ and $D_{KL}(\cdot || \cdot)$ denotes KL divergence. GCE equals CE when $q \rightarrow 0$ and equals MAE when $q=1$, TCE equals CE when $q \rightarrow +\infty$ and equals MAE when $t=1$, JS equals CE when $\pi_1 \rightarrow 0$ and equals MAE when $\pi_1 \rightarrow 1$. 	

To intuitively show the behavior of different loss functions on different examples, we perform gradient analyses. First, the gradients of loss functions mentioned above can be calculated as
\begin{gather}
	\frac{\partial L_{\text{CE}} (f(\mathbf{x}), y)}{\partial \boldsymbol{\theta}}  = -\frac{1}{f_y(\mathbf{x})} \nabla_{\boldsymbol{\theta}}f_y(\textbf{x}), \\
	\frac{\partial L_{\text{MAE}} (f(\mathbf{x}), y)}{\partial \boldsymbol{\theta}} = -\nabla_{\boldsymbol{\theta}}f_y(\textbf{x}), \\
	\frac{\partial L_{\text{GCE}} (f(\mathbf{x}), y)}{\partial \boldsymbol{\theta}} = -f_y^{q-1}(\mathbf{x}) \nabla_{\boldsymbol{\theta}}f_y(\textbf{x}), \\
	\frac{\partial L_{\text{TCE}} (f(\mathbf{x}), y)}{\partial \boldsymbol{\theta}} = - \sum_{i=1}^{t} (1 - f_y(\mathbf{x}))^{i-1} \nabla_{\boldsymbol{\theta}}f_y(\textbf{x}), \\
	\frac{\partial L_{\text{JS}} (f(\mathbf{x}), y)}{\partial \boldsymbol{\theta}} = - \frac{(1-\pi_1) \log (1 + \frac{\pi_1}{(1 - \pi_1) f_y(\mathbf{x})})}{(1-\pi_1) \log(1-\pi_1)} \nabla_{\boldsymbol{\theta}}f_y(\textbf{x}), \label{gradient_js}
\end{gather}
where $\boldsymbol{\theta}$ is the set of parameters of $f$. A small $f_y(\textbf{x})$ means that the prediction on $\textbf{x}$ is not consistent with the given labels, i.e., $(\textbf{x}, y)$ is a hard example. We plot the gradient of loss w.r.t the $y$-th posterior probability (i.e. the coefficient of $\nabla_{\boldsymbol{\theta}}f_y(\textbf{x})$ on the right side of equations above) in Figure~\ref{fig_gradient}. CE puts heavy emphasis on hard examples, resulting in significant overfitting in the presence of label noise since these hard examples may be those with incorrect labels. In contrast, MAE gives equal weight to all examples, effectively avoiding overfitting but leading to severe underfitting. Robust loss functions strike a balance between CE and MAE by giving less emphasis to hard examples than CE does and more attention to them than MAE does. By properly tuning the hyperparameters, robust loss functions can achieve improved generalization.

\begin{figure}[t]
	\centering
	\includegraphics[width= \textwidth]{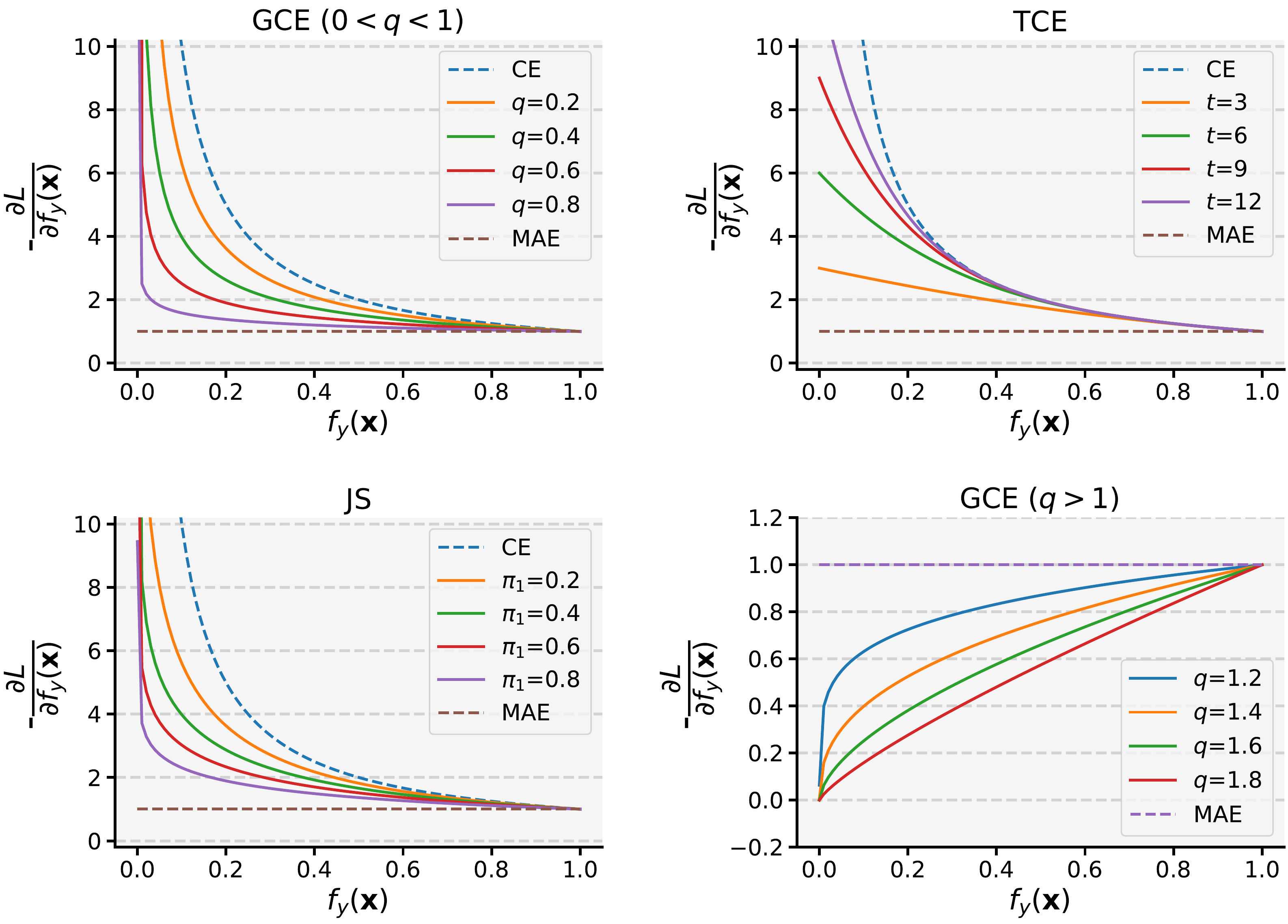}
	\caption{Gradient of loss w.r.t the $y$-th posterior probability.}
	\label{fig_gradient}
\end{figure}

\subsection{Dynamics-Aware Loss} \label{sec:dal}
Although robust loss functions outperform both CE and MAE substantially, there exists a clear discrepancy between these static losses and the dynamics of DNNs learning with label noise. Specifically, the work of~\cite{dynamic} empirically proved that DNNs first memorize the correct labels and then memorize the wrong labels, so static trade-off between fitting ability and robustness results in inferior performance. Also, static losses are sensitive to hyperparameters, with slight changes potentially leading to serious underfitting or overfitting.

According to the dynamics of DNNs learning with label noise, we think it is reasonable to prioritize fitting ability at the early stage of the training process when DNNs learn beneficial patterns, and then increase the weight of robustness gradually because DNNs overfit label noise thereafter. Taking GCE as an example, we gradually increase $q$ which controls the trade-off between fitting ability and robustness rather than fix it during the whole training process. For simplicity, we let $q$ increase linearly from $q_s$ to $q_e$. We call GCE with a dynamic $q$ dynamic GCE (DGCE) in the following. 

Based on the gradient analyses in Figure~\ref{fig_gradient}, smaller $q$ provides stronger fitting ability. Consequently, we set $q_s$ to a small value for a rapid performance increase at the early stage. Moreover, we think the range of $q$ should not be limited within (0,1) as the vanilla GCE does. As shown in righ-bottom subfigure of Figure~\ref{fig_gradient}, with $q>1$ GCE puts more emphasis on easy examples. In fact, GCE with $q>1$ plays a similar role to some reweighting methods~\cite{discussion2}. The main difference is that the latter explicitly assign more weights to correctly labeled examples that are typically identified by meta DNNs, which are usually trained on a clean dataset without label noise, while the former implicitly puts more emphasis on examples whose model predictions are more consistent with the provided labels. According to the widely-used small-loss trick, these examples are more likely to be correctly labeled. Therefore, DGCE with $q_e>1$ can further reduce the negative impact of label noise and guarantee a steady performance growth at the later stage. Unless otherwise specified, we always set $q_e=1.5$ and only tune $q_s$ in the following. In this way, DGCE is relatively insensitive to $q_s$ because $q_e>1$ relives DNNs from overfitting. 

On the one hand, since some other robust loss functions such as TCE and JS can also be regarded as interpolations between CE and MAE, the dynamic rule can also provide them with performance boosts. On the other hand, only GCE can pay more attention to easy examples than hard ones by extending the range of its hyper-parameter, so DGCE outperforms other dynamic losses. We present theoretical analyses in Section~\ref{sec_theory} to verify the merit of DGCE and defer the empirical evidence to Section~\ref{sec:integration}.

Although DGCE overcomes drawbacks of static loss functions, we observe that it may suffers from underfitting at the later stage of the training process. In Eq.~(\ref{eq_nrm}), $\tilde{R}_L(f)$ is in the form of $\mathbb{E}_{p(\mathbf{x})} [P+Q]$. We define $P=\tilde{p}(\tilde{y}^*|\mathbf{x}) L(f(\mathbf{x}), \tilde{y}^*)$ where $\tilde{y}^* = \arg \max_{\tilde{y}} \tilde{p}(\tilde{y}|\mathbf{x})$, $Q=\sum_{\tilde{y} \neq \tilde{y}^*}\tilde{p}(\tilde{y}|\mathbf{x}) L(f(\mathbf{x}), \tilde{y})$. The Q term may decrease even when P is fixed, which implies that $f_{\tilde{y}^*}(\mathbf{x})$ may be fixed when the loss decreases, especially if the noise rate is relatively high, i.e., $\tilde{p}(\tilde{y}^*|\mathbf{x})$ is relatively small. To solve this problem, we introduce a bootstrapping term
\begin{equation}
	L_{\text{BS}}(f(\mathbf{x})) = - \log \max_y f_y(\mathbf{x})
\end{equation}
when $q>1$. Intuitively, since $f$ has been trained on $\tilde{p}(\mathbf{x}, \tilde{y})$ with DGCE with $q<1$ during the early stage, at this point $f_{\tilde{y}^*}(\mathbf{x}) > f_{\tilde{y}}(\mathbf{x}), \forall \tilde{y} \neq \tilde{y}^*$ holds with a high probability, so $L_{\text{BS}}$ encourages an explicit increase of $f_{\tilde{y}^*}(\mathbf{x})$. To make the training process stable, we gradually increase the weight of the bootstrapping term $\lambda$ from 0 to $\lambda_e$. Unless otherwise specified, we always set $\lambda_e=1.0$ in the following.

\begin{algorithm}[t]
	\caption{Dynamics-Aware Loss}
	\label{al_dal}
	\KwIn{Data $\{(\mathbf{x}_i, \tilde{y}_i)\}_{i=1}^n$, $T$, $q_s$, $q_e$ (1.5 by default), $\lambda_e$ (1.0 by default)}
	\KwOut{Classifier $f$}
	$t_0 = \frac{1 - q_s}{q_e - q_s} T$ \\
	\For{$t=1:T$}
	{
		$q(t) = q_s + (q_e - q_s)\frac{t}{T}$ \\
		\eIf{$t<t_0$ or $t_0 > T$}{$\lambda(t)=0$}{$\lambda(t)=\lambda_e\frac{t-t_0}{T-t_0}$}
		Minimize $L(f)=\frac{1}{n} \sum_i (\frac{1-f_{\tilde{y}}^{q(t)}(\mathbf{x}_i)}{q(t)} + \lambda(t) \frac{- \log \max_y f_y(\mathbf{x}_i)}{q(t)\log k}) $
	}
\end{algorithm}	

Dynamics-aware loss (DAL) is the combination of DGCE and $L_{\text{BS}}$, we display the pseudocode of DAL in Algorithm~\ref{al_dal}. As the range of DGCE is $[0, \frac{1}{q(t)}]$ and that of $L_{\text{BS}}$ is $[0, \log k]$ (since $\max_y f_y(\mathbf{x}) > \frac{1}{k}$ always holds for a $k$-class classification problem), we divide $L_{\text{BS}}$ by $q(t)\log k$ such that its range is the same as that of DGCE. Obviously, our proposed DAL requires no extra information and incurs no additional memory burden or computational cost.

\section{Theoretical Analyses} \label{sec_theory}
In this section, we provide some theoretical guarantees on the performance of DAL. Since $f$ is a DNN, according to the universal approximation theorem~\cite{universal}, we suppose that $f^*_L$ minimizes $R_L(f(\mathbf{x}))$ for each $\mathbf{x}$.

\begin{theorem} \label{theorem:1}
	With $0<q<1, f^*_{\mathrm{GCE}}(\mathbf{x}) = (\frac{p(1|\mathbf{x})^{\frac{1}{1-q}}}{\sum_i p(i|\mathbf{x})^{\frac{1}{1-q}}},..., \frac{p(k|\mathbf{x})^{\frac{1}{1-q}}}{\sum_i p(i|\mathbf{x})^{\frac{1}{1-q}}})$.
\end{theorem}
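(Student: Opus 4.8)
The plan is to exploit the pointwise structure of the risk: by the universal approximation assumption stated before the theorem, $f^*_{\mathrm{GCE}}$ minimizes the conditional risk $R_{\mathrm{GCE}}(f(\mathbf{x}))$ separately for each fixed $\mathbf{x}$, so it suffices to solve a single finite-dimensional constrained optimization over the probability simplex. First I would substitute the GCE formula into the pointwise risk and simplify. Writing $c_y = f_y(\mathbf{x})$ and $p_y = p(y|\mathbf{x})$, we have
\begin{equation*}
	R_{\mathrm{GCE}}(f(\mathbf{x})) = \sum_y \frac{1 - c_y^q}{q}\, p_y = \frac{1}{q}\Bigl(1 - \sum_y p_y c_y^q\Bigr),
\end{equation*}
using $\sum_y p_y = 1$. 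Hence minimizing the risk over the constraint set $\mathcal{C} = \{\mathbf{c} \in [0,1]^k : \mathbf{1}^T \mathbf{c} = 1\}$ is equivalent to maximizing $g(\mathbf{c}) := \sum_y p_y c_y^q$ over the simplex.

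Next I would observe that because $0 < q < 1$, the scalar map $t \mapsto t^q$ is strictly concave on $[0,\infty)$, so $g$ is concave and the feasible set is convex; the maximizer is therefore unique and is characterized by its first-order (KKT) conditions. Introducing a Lagrange multiplier $\mu$ for the equality constraint $\sum_y c_y = 1$, stationarity reads $q\, p_y\, c_y^{q-1} = \mu$ for each $y$ with $p_y > 0$, which I would solve for $c_y$ to obtain $c_y \propto p_y^{1/(1-q)}$. Imposing the normalization then yields exactly
\begin{equation*}
	c_y = \frac{p_y^{\frac{1}{1-q}}}{\sum_i p_i^{\frac{1}{1-q}}},
\end{equation*}
which is the claimed expression.

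The one point that needs care — the main obstacle, such as it is — is justifying that the nonnegativity constraints are inactive, i.e. that the stationary point found via the equality-only Lagrangian is genuinely the global optimum over the whole simplex rather than a boundary point. Here the concavity of $g$ does the heavy lifting: for $0 < q < 1$ the derivative $q\, c^{q-1} \to +\infty$ as $c \to 0^+$, so the gradient of $g$ blows up toward any face where a coordinate with $p_y > 0$ vanishes, forcing the sign-unconstrained critical point into the relative interior and making it the unique maximizer. Finally I would note that coordinates with $p_y = 0$ are handled consistently, since both the objective contribution $p_y c_y^q$ and the formula $p_y^{1/(1-q)}$ vanish there, so the expression remains correct without any further assumption on the support of $p(\cdot|\mathbf{x})$.
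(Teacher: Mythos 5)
Your proof is correct and follows essentially the same route as the paper's: both reduce to a pointwise constrained optimization over the simplex, apply a Lagrange-multiplier stationarity condition to get $f_y(\mathbf{x}) \propto p(y|\mathbf{x})^{1/(1-q)}$, and normalize. The only difference is that you additionally justify why the equality-constrained stationary point is the global optimum (concavity of $t \mapsto t^q$, the gradient blow-up near the boundary, and the $p_y = 0$ case), points the paper's proof passes over silently.
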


\begin{proof}
	According to Eq.~(\ref{eq_rm}), the optimization problem can be formulated as
	\begin{gather}
		\text{minimize} \quad \sum_y p(y|\mathbf{x}) \frac{1 - f_y^q(\mathbf{x})}{q}, \\
		\text{s.t.} \quad \sum_y f_y(\mathbf{x}) - 1 = 0.
	\end{gather}
	The Lagrangian function can be formulated as
	\begin{equation}
		L(f(\mathbf{x}), \lambda) = \sum_y p(y|\mathbf{x}) \frac{1 - f_y^q(\mathbf{x})}{q} + \lambda (\sum_y f_y(\mathbf{x}) - 1).
	\end{equation}
	Let $\frac{\partial L(f(\mathbf{x}), \lambda)}{\partial f_y(\mathbf{x})} = 0$, we can derive
	\begin{equation}
		\frac{f_y^{1-q} (\mathbf{x})}{p(y|\mathbf{x})} = \frac{1}{\lambda}.
	\end{equation}
	Combining the constrain that $\sum_y f_y(\mathbf{x}) = 1$, we can conclude the proof.
\end{proof}

\begin{theorem} \label{theorem:2}
	$\forall q>1, f^*_{\mathrm{GCE}}(\mathbf{x}) = \mathbf{e}^{(y^*)}$, where $y^* = \arg \max_y p(y|\mathbf{x})$, $\mathbf{e}^{(y)}$ denotes a one-hot vector with $\mathbf{e}^{(t)}_j=1$ iff $j=t$.
\end{theorem}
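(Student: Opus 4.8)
The plan is to reduce to a pointwise optimization on the probability simplex, exactly as in the proof of Theorem~\ref{theorem:1}, and then exploit the fact that pushing the exponent past $1$ flips the curvature of the objective. By the universal approximation assumption, it suffices to minimize $R_L(f(\mathbf{x})) = \sum_y p(y|\mathbf{x}) \frac{1 - f_y^q(\mathbf{x})}{q}$ over $\mathbf{c} = f(\mathbf{x})$ ranging in the simplex $\{\mathbf{c} \in [0,1]^k : \sum_y c_y = 1\}$. Since $q > 0$, this is equivalent to maximizing $g(\mathbf{c}) = \sum_y p(y|\mathbf{x}) c_y^q$.

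First I would highlight the structural contrast with Theorem~\ref{theorem:1}: for $0 < q < 1$ the map $c_y \mapsto c_y^q$ is concave, so $g$ is concave and the interior Lagrangian stationary point is its maximizer; but for $q > 1$ the map is convex, so $g$ is a convex function on the simplex. A convex function on a compact convex polytope attains its maximum at an extreme point, and the extreme points of the simplex are precisely the one-hot vectors $\mathbf{e}^{(j)}$. Evaluating gives $g(\mathbf{e}^{(j)}) = p(j|\mathbf{x})$, so the maximum equals $\max_j p(j|\mathbf{x})$ and is attained at $j = y^*$, yielding $f^*_{\mathrm{GCE}}(\mathbf{x}) = \mathbf{e}^{(y^*)}$.

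Alternatively, and perhaps more transparently, I would give a direct elementary bound that avoids the extreme-point theorem. For $c_y \in [0,1]$ and $q > 1$ we have $c_y^q \le c_y$, hence $g(\mathbf{c}) = \sum_y p(y|\mathbf{x}) c_y^q \le \sum_y p(y|\mathbf{x}) c_y \le \big(\max_y p(y|\mathbf{x})\big)\sum_y c_y = p(y^*|\mathbf{x})$. Since $g(\mathbf{e}^{(y^*)}) = p(y^*|\mathbf{x})$, the bound is tight exactly at the claimed vertex.

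The main obstacle is the equality analysis rather than the bound itself. I must track both inequalities: $c_y^q = c_y$ forces $c_y \in \{0,1\}$, while $\sum_y p(y|\mathbf{x}) c_y = p(y^*|\mathbf{x})$ forces $c_y = 0$ for every $y$ with $p(y|\mathbf{x}) < p(y^*|\mathbf{x})$. Combined with $\sum_y c_y = 1$, these pin down $\mathbf{c} = \mathbf{e}^{(y^*)}$ precisely when the $\arg\max$ is unique. If several classes tie for the largest posterior, the same analysis shows the optimizers are exactly the one-hot vectors supported on a single maximizing class (convex combinations are strictly suboptimal, since $2^{1-q} < 1$ for $q > 1$), so I would state the result under the implicit assumption that $y^*$ is unique. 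This tie case is the only genuine subtlety; both the convexity argument and the elementary inequality go through cleanly otherwise.
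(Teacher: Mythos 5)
Your second, elementary argument is precisely the paper's proof: the paper writes the chain $\sum_y p(y|\mathbf{x}) \frac{1 - f_y^q(\mathbf{x})}{q} \geq \frac{1 - \sum_y p(y|\mathbf{x}) f_y(\mathbf{x})}{q} \geq \frac{1 - p(y^*|\mathbf{x})}{q}$, which is exactly your two inequalities $c_y^q \le c_y$ (for $c_y \in [0,1]$, $q>1$) and $\sum_y p(y|\mathbf{x}) c_y \le p(y^*|\mathbf{x})$, and then asserts equality holds iff $f(\mathbf{x}) = \mathbf{e}^{(y^*)}$. Your first route, via convexity of $\mathbf{c} \mapsto \sum_y p(y|\mathbf{x}) c_y^q$ for $q>1$ and the fact that a convex function on the simplex is maximized at a vertex, is a correct alternative, though it buys little here since the elementary bound is already a one-liner; its main virtue is making the contrast with Theorem~\ref{theorem:1} (concave vs.\ convex objective) structurally explicit. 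Where you genuinely go beyond the paper is the equality analysis: the paper states the ``iff'' without tracking when each inequality is tight, and that claim is literally correct only when $\arg\max_y p(y|\mathbf{x})$ is unique --- under ties, every one-hot vector supported on a maximizing class attains the minimum, exactly as you observe (and strict convexity rules out mixtures, since $2^{1-q}<1$). Flagging that the theorem implicitly assumes uniqueness of $y^*$ is the right reading; the paper's proof carries the same unstated assumption.
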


\begin{proof}
	With $q>1$, we have
	\begin{equation}
		\sum_y p(y|\mathbf{x}) \frac{1 - f_y^q(\mathbf{x})}{q} \geq \frac{1 - \sum_y p(y|\mathbf{x}) f_y(\mathbf{x})}{q} \geq \frac{1 - p(y^* | \mathbf{x})}{q} ,
	\end{equation}
	with equality holds iff $f(\mathbf{x}) = \mathbf{e}^{(y^*)}$. This completes the proof.
\end{proof}

\begin{corollary} \label{corollary:1}
	$\forall q>1, \lambda >0, f^*_{\mathrm{GCE+BS}}(\mathbf{x}) = \mathbf{e}^{(y^*)}$.
\end{corollary}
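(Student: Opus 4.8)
The plan is to exploit the fact that the bootstrapping term $-\log \max_j f_j(\mathbf{x})$ carries no label, so after taking the expectation over $y$ it survives as a single additive summand rather than entering a weighted combination. Concretely, at a fixed $\mathbf{x}$, and folding the positive normalizing constant $\tfrac{1}{q\log k}$ into $\lambda$, I would record the combined risk as
\[
	R_{\mathrm{GCE+BS}}(f(\mathbf{x})) = \sum_y p(y|\mathbf{x}) \frac{1 - f_y^q(\mathbf{x})}{q} + \lambda \Bigl(-\log \max_j f_j(\mathbf{x})\Bigr),
\]
where $\sum_y p(y|\mathbf{x}) = 1$ is used to pull the $y$-free factor out of the sum. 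The whole argument then rests on the observation that both summands are minimized at the \emph{same} point $\mathbf{e}^{(y^*)}$, so no trade-off between them is required.

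For the first summand I would simply invoke Theorem~\ref{theorem:2}: since $q>1$, it is bounded below by $\tfrac{1 - p(y^*|\mathbf{x})}{q}$, and crucially the equality condition there is sharp, holding if and only if $f(\mathbf{x}) = \mathbf{e}^{(y^*)}$. For the second summand I would note that $f(\mathbf{x})$ lies on the probability simplex, so $\max_j f_j(\mathbf{x}) \le 1$ and hence $-\log \max_j f_j(\mathbf{x}) \ge 0$, with equality exactly when one coordinate equals $1$, i.e.\ when $f(\mathbf{x})$ is one-hot.

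Combining the two is then immediate. Because $\mathbf{e}^{(y^*)}$ is itself one-hot, it attains the minimum of both summands simultaneously, giving $R_{\mathrm{GCE+BS}}(\mathbf{e}^{(y^*)}) = \tfrac{1 - p(y^*|\mathbf{x})}{q}$. For any $f(\mathbf{x}) \neq \mathbf{e}^{(y^*)}$, the sharp equality condition of Theorem~\ref{theorem:2} fails, so the GCE summand is \emph{strictly} above $\tfrac{1 - p(y^*|\mathbf{x})}{q}$, while the bootstrapping summand (with $\lambda>0$) contributes a nonnegative amount; their sum is therefore strictly larger. This shows $\mathbf{e}^{(y^*)}$ is the unique pointwise minimizer, and applying the same argument for every $\mathbf{x}$ together with the universal-approximation assumption yields $f^*_{\mathrm{GCE+BS}}$.

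The only place the argument could fail—and hence the point worth checking carefully—is the \emph{alignment} of the two equality conditions: that the unique GCE-minimizer also minimizes the bootstrapping term, rather than merely being feasible for it. This holds precisely because the GCE-optimal point is one-hot and one-hot vectors zero out $-\log \max_j f_j$, so the two objectives cooperate instead of competing. Had Theorem~\ref{theorem:2} instead produced a non-one-hot minimizer (as in the $q<1$ regime of Theorem~\ref{theorem:1}), one would have to balance two genuinely competing terms and the conclusion would no longer be a clean one-hot vector; thus the restriction $q>1$ is exactly what makes the corollary go through.
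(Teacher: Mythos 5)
Your proof is correct and follows essentially the same route as the paper's: invoke Theorem~\ref{theorem:2} for the GCE term, observe that any one-hot vector minimizes $L_{\text{BS}}$, and conclude from the alignment of the two minimizers at $\mathbf{e}^{(y^*)}$. The only difference is that you spell out the strict-inequality/uniqueness argument that the paper leaves implicit, which is a welcome clarification but not a different method.
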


\begin{proof}
	Based on Theorem 2, $f(\mathbf{x}) = \mathbf{e}^{(y^*)}$ minimizes $R_{\text{GCE}}(f(\mathbf{x}))$. Besides, any one-hot vector minimizes $L_{\text{BS}}(f(\mathbf{x}))$. Therefore, the above corollary holds.
\end{proof}

\begin{remark}
	Theorem~\ref{theorem:1} indicates that the prediction of $f^*_{\text{GCE}}$ with $q<1$ depends on $q$ while Corollary~\ref{corollary:1} indicates that the prediction of $f^*_{\text{GCE+BS}}$ with any $q>1, \lambda>0$ is always a one-hot vector which is independent of both $q$ and $\lambda$. Therefore, under risk minimization framework, minimizing DAL with $q_e=1.5,\lambda_e=1.0$ derives the same classifier as minimizing GCE+BS with any $q>1, \lambda>0$ finally because at the later stage of the training process, DAL with $q_e=1.5,\lambda_e=1.0$ always has $q>1$ and $\lambda>0$. That is, the theoretical results for GCE+BS with $q>1, \lambda>0$ can directly apply to DAL with $q_e=1.5,\lambda_e=1.0$.
\end{remark}

\begin{definition} \label{definition:1}
	For a multi-class classification classifier, we define 0-1 loss as
	\begin{equation}
		L_{0\raisebox{0mm}{-}1} (f(\mathbf{x}), y) = \mathds{1}  [\arg\max_{y'} f_{y'}(\mathbf{x}) \neq y],
	\end{equation}
	where $\mathds{1} [\cdot]$ is the indicator function. The classifier minimizing 0-1 risk is called Bayes optimal classifier and the corresponding risk is called Bayes error rate.
\end{definition}

\begin{theorem} \label{theorem:3}
	$\forall q>1, \lambda>0,$ given a classifier $f$, we have
	\begin{equation}
		R_{0\raisebox{0mm}{-}1}(f) - R_{0\raisebox{0mm}{-}1}(f^*_{0\raisebox{0mm}{-}1}) \leq 1 - \mathbb{E}_{p(\mathbf{x})} \mathds{1} [(y^* = \tilde{y}^*) \land (f_{\tilde{y}^*}(\mathbf{x}) > \tilde{f}^*_{\mathrm{GCE+BS}\tilde{y}^*}(\mathbf{x}) - \frac{1}{2})], \label{eq_main}
	\end{equation}
	where $y^* = \arg \max_y p(y|\mathbf{x})$ and $\tilde{y}^* = \arg \max_{\tilde{y}} \tilde{p} (\tilde{y}|\mathbf{x})$.
\end{theorem}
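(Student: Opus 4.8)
The plan is to collapse the elaborate threshold on the right-hand side into a clean one, prove a pointwise ``correct classification'' statement on the good event, and then bound the excess $0$-$1$ risk by splitting an expectation.

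First I would transport Corollary~\ref{corollary:1} from the clean distribution to the noisy one. Since $\tilde{p}(\mathbf{x},\tilde{y})$ is itself a valid joint distribution over $\mathcal{X}\times\mathcal{Y}$, and the proof of Theorem~\ref{theorem:2} and Corollary~\ref{corollary:1} uses only the functional form of the GCE$+$BS risk (not any property specific to $p$), the same argument with $p$ replaced by $\tilde{p}$ gives $\tilde{f}^*_{\mathrm{GCE+BS}}(\mathbf{x}) = \mathbf{e}^{(\tilde{y}^*)}$ for all $q>1,\lambda>0$, where $\tilde{y}^* = \arg\max_{\tilde{y}} \tilde{p}(\tilde{y}|\mathbf{x})$. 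In particular its $\tilde{y}^*$-th coordinate satisfies $\tilde{f}^*_{\mathrm{GCE+BS}\tilde{y}^*}(\mathbf{x}) = 1$, so the condition $f_{\tilde{y}^*}(\mathbf{x}) > \tilde{f}^*_{\mathrm{GCE+BS}\tilde{y}^*}(\mathbf{x}) - \tfrac{1}{2}$ inside the indicator reduces to $f_{\tilde{y}^*}(\mathbf{x}) > \tfrac{1}{2}$. Writing $A(\mathbf{x})$ for the event $\{y^*=\tilde{y}^*\}\cap\{f_{\tilde{y}^*}(\mathbf{x})>\tfrac{1}{2}\}$, the claim becomes $R_{0\text{-}1}(f) - R_{0\text{-}1}(f^*_{0\text{-}1}) \leq 1 - \mathbb{E}_{p(\mathbf{x})}\mathds{1}[A(\mathbf{x})]$.

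Next I would prove the key pointwise fact that on $A(\mathbf{x})$ the classifier $f$ agrees with the Bayes prediction. Because the coordinates of $f(\mathbf{x})$ sum to one, the strict inequality $f_{\tilde{y}^*}(\mathbf{x})>\tfrac{1}{2}$ forces every other coordinate to be at most $1-f_{\tilde{y}^*}(\mathbf{x})<\tfrac{1}{2}$, so $\tilde{y}^*$ is the unique maximizer $\arg\max_{y'} f_{y'}(\mathbf{x})$; combined with $y^*=\tilde{y}^*$ this yields $\arg\max_{y'} f_{y'}(\mathbf{x}) = y^*$. Then, using Definition~\ref{definition:1} and the fact that the Bayes classifier for $0$-$1$ loss predicts $y^*=\arg\max_y p(y|\mathbf{x})$, I would write $R_{0\text{-}1}(f) = \mathbb{E}_{p(\mathbf{x})}[1 - p(\hat{y}_f(\mathbf{x})|\mathbf{x})]$ and $R_{0\text{-}1}(f^*_{0\text{-}1}) = \mathbb{E}_{p(\mathbf{x})}[1 - p(y^*|\mathbf{x})]$, where $\hat{y}_f(\mathbf{x}) = \arg\max_{y'} f_{y'}(\mathbf{x})$, so that the excess risk equals $\mathbb{E}_{p(\mathbf{x})}[\,p(y^*|\mathbf{x}) - p(\hat{y}_f(\mathbf{x})|\mathbf{x})\,]$. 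Splitting this expectation over $A$ and its complement, the integrand vanishes on $A$ by the previous paragraph, while on $A^c$ it is bounded by $p(y^*|\mathbf{x}) \leq 1$; hence the excess risk is at most $\mathbb{E}_{p(\mathbf{x})}\mathds{1}[A^c] = 1 - \mathbb{E}_{p(\mathbf{x})}\mathds{1}[A(\mathbf{x})]$, which is exactly the asserted inequality.

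The main obstacle is the first step: recognizing that the apparently intricate threshold $\tilde{f}^*_{\mathrm{GCE+BS}\tilde{y}^*}(\mathbf{x})-\tfrac{1}{2}$ is nothing more than $\tfrac{1}{2}$ once the optimum under the noisy distribution is identified, and justifying the transport of Corollary~\ref{corollary:1} to $\tilde{p}$ cleanly. Everything afterward is elementary: a simplex argument (where the \emph{strict} inequality is essential to rule out ties when identifying the $\arg\max$) followed by a routine splitting of the expectation with the crude bound $p(y^*|\mathbf{x})\le 1$ on the bad event.
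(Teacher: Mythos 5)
Your proposal is correct and takes essentially the same route as the paper's proof: reduce the threshold to $\tfrac{1}{2}$ via Corollary~\ref{corollary:1} applied under the noisy distribution, deduce that $\arg\max_{y'} f_{y'}(\mathbf{x})$ coincides with the Bayes prediction $y^*$ on the good event, and bound the excess $0$-$1$ risk by the measure of the complementary event. The only difference is that you spell out two steps the paper leaves implicit — the justification for transporting Corollary~\ref{corollary:1} from $p$ to $\tilde{p}$, and the explicit decomposition $R_{0\text{-}1}(f)-R_{0\text{-}1}(f^*_{0\text{-}1})=\mathbb{E}_{p(\mathbf{x})}[\,p(y^*|\mathbf{x})-p(\hat{y}_f(\mathbf{x})|\mathbf{x})\,]$ with the crude bound on the bad event — which is added rigor, not a different approach.
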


\begin{proof}
	According to Corollary~\ref{corollary:1}, $\tilde{f}^*_{\text{GCE+BS}\tilde{y}^*}(\mathbf{x}) = 1$, so if $f_{\tilde{y}^*}(\mathbf{x}) > \tilde{f}^*_{\text{GCE+BS}\tilde{y}^*}(\mathbf{x}) - \frac{1}{2} = \frac{1}{2}$, we have $\arg \max_{\tilde{y}} f_{\tilde{y}}(\mathbf{x}) = \tilde{y}^*$. Furthermore, if $y^* = \tilde{y}^*$, we have $\arg \max_{\tilde{y}} f_{\tilde{y}}(\mathbf{x}) = y^*=\arg \max_{y} f^*_{0\raisebox{0mm}{-}1y}(\mathbf{x})$. Therefore, we have $R_{0\raisebox{0mm}{-}1}(f(\mathbf{x})) = R_{0\raisebox{0mm}{-}1}(f^*_{0\raisebox{0mm}{-}1}(\mathbf{x}))$ if $y^* = \tilde{y}^*$ and $f_{\tilde{y}^*}(\mathbf{x}) > \tilde{f}^*_{\text{GCE+BS}\tilde{y}^*}(\mathbf{x}) - \frac{1}{2}$, which concludes the proof.
\end{proof}

\begin{remark} \label{remark:2}
	In practice, it is almost impossible to derive $\tilde{f}^*_{\mathrm{GCE+BS}}$ from finite training data. Theorem~\ref{theorem:3} gives an upper bound of the difference between the error rate caused by a classifier $f$ (rather than $\tilde{f}^*_{\mathrm{GCE+BS}}$) and Bayes error rate. Different from previous theoretical results~\cite{mae,gce,tce}, Theorem~\ref{theorem:3} makes no assumption about the label noise model, for instance, the label noise is symmetric or asymmetric. The upper bound is determined by both the data distribution (whether $y^* = \tilde{y}^*$ holds) and the classifier (whether $f_{\tilde{y}^*}(\mathbf{x}) > \tilde{f}^*_{\text{GCE+BS}\tilde{y}^*}(\mathbf{x}) - \frac{1}{2}$ holds, which is a sufficient condition for $\arg \max_{\tilde{y}} f_{\tilde{y}}(\mathbf{x}) = \tilde{y}^*$ because $\tilde{f}^*_{\text{GCE+BS}\tilde{y}^*}(\mathbf{x})=1$). If we replacing $\tilde{f}^*_{\mathrm{GCE+BS}}$ with $q>1,\lambda>0$ with $\tilde{f}^*_{\mathrm{GCE}}$ with $0<q<1$ in Theorem~\ref{theorem:3}, to make it hold, the gap between $f_{\tilde{y}^*}(\mathbf{x})$ and $\tilde{f}^*_{\text{GCE}\tilde{y}^*}(\mathbf{x})$ should be smaller than that between $f_{\tilde{y}^*}(\mathbf{x})$ and $\tilde{f}^*_{\text{GCE+BS}\tilde{y}^*}(\mathbf{x})$ (i.e. $\frac{1}{2}$). Because if $\tilde{p}(\tilde{y}^*|\mathbf{x}) < 1$, we have $\tilde{f}^*_{\text{GCE}\tilde{y}^*}(\mathbf{x}) < 1$ according to Theorem~\ref{theorem:1}. Furthermore, higher noise rate may lead to smaller $\tilde{p}(\tilde{y}^*|\mathbf{x})$, resulting in smaller $\tilde{f}^*_{\text{GCE}\tilde{y}^*}(\mathbf{x})$. Therefore, we cannot conclude that $\arg \max_{\tilde{y}} f_{\tilde{y}}(\mathbf{x}) = \tilde{y}^*$ with $f_{\tilde{y}^*}(\mathbf{x}) > \tilde{f}^*_{\text{GCE}\tilde{y}^*}(\mathbf{x}) - \frac{1}{2}$. Consequently, the error rate of the classifier trained with GCE+BS with $q>1$ and $\lambda > 0$ tends to be lower than that trained with GCE with $0<q<1$.
\end{remark}

\section{Experiement} \label{sec_experiment}

In this section, we first introduce the experimental setup (Section~\ref{sec:setup}). Then we perform a comprehensive comparison of DAL with many baselines on various benchmarks (Section~\ref{sec:comparison}) and sensitivity analyses for DAL (Section~\ref{sec:sensitivity}). After that, a case study shows why DAL can outperform other loss functions (Section~\ref{sec:case}). Finally, we empirically verify that DAL is complementary to other robust algorithms for learning with label noise (Section~\ref{sec:integration}) and it helps to improve backdoor robustness (Section~\ref{sec:backdoor}).

\subsection{Setup} \label{sec:setup}

\textbf{Datasets} \quad We perform experiments on CIFAR-10 and CIFAR-100 with synthetic label noise and two real-world noisy datasets Animal-10N~\cite{animal10n} and Webvision~\cite{webvision}. For CIFAR, we use three types of synthetic label noise, including
\begin{itemize}
	\item symmetric noise. The labels are resampled from a uniform distribution over all labels with probability $\eta$, where $\eta$ is the noise rate.
	
	\item asymmetric noise. For CIFAR-10, the labels are changed as follows with probability $\eta$: truck $\rightarrow$ automobile, bird $\rightarrow$ airplane, cat $\leftrightarrow$ dog and deer $\rightarrow$ horse; for CIFAR-100, the labels are cycled to the next sub-class of the same super-class, e.g. the labels of super-class ``vehicle1" are modified as follows: bicycle $\rightarrow$ bus $\rightarrow$ motorcycle $\rightarrow$ pickup truck $\rightarrow$ train $\rightarrow$ bicycle.
	
	\item instance-dependent noise. For both CIFAR-10 and CIFAR-100, the generation follows Algorithm 2 in~\cite{instance}.\footnote{We also provide performance on other types of instance-dependent noise in the Appendix.}
	
\end{itemize}
For Webvision, we follow the ``Mini" setting in~\cite{apl} which takes only the first 50 classes of the Google resized images as the training dataset. Then we evaluate the classification performance on the same 50 classes of Webvision validation set.

\begin{table*}[t]
	\caption{Optimal hyper-parameters of robust loss functions under our setup.}
	\label{tab_hyperparam}
	\setlength\tabcolsep{6pt}
	\centering
	\renewcommand{\arraystretch}{0.9}
	\scalebox{0.8}{
		\begin{tabular}{lccccc}
			\toprule
			Method & Hyper-parameter & CIFAR-10 & CIFAR-100 & Animal-10N & WebVision \\
			\midrule
			GCE~\cite{gce} & ($q$) & (0.9) & (0.7) & - & - \\
			SCE~\cite{sce} & ($\alpha$, $\beta$) & (0.1, 10.0) & (6.0, 1.0) & (6.0, 1.0) & (10.0, 1.0) \\
			NLNL~\cite{nlnl} & ($N$) & (1) & (110) & - & - \\
			BTL~\cite{btl} & ($t_1$, $t_2$) & (0.7, 3.0) & (0.7, 1.5) & - & - \\
			NCE+RCE~\cite{apl} & ($\alpha$, $\beta$) & (1.0, 0.1) & (10.0, 0.1) & (10.0, 0.1) & (50.0, 0.1) \\
			TCE~\cite{tce} & ($t$) & (3) & (18) & - & - \\
			NCE+AGCE~\cite{agce} & ($\alpha$, $\beta$, $a$, $q$) & (1.0, 0.4, 6, 1.5) & (10.0, 0.1, 3, 3) & (0.0, 1.0, 1e-5, 0.7) & (0.0, 1.0, 1e-5, 0.5)\\
			CE+SR~\cite{sr} & ($\tau$, $\lambda_0$, $r$, $p$, $\rho$) & (0.5, 1.5, 1, 0.1, 1.02) & (0.5, 8.0, 1, 0.01, 1.02) & (0.5, 8.0, 1, 0.01, 1.02) & (0.5, 2.0, 1, 0.01, 1.02) \\
			JS~\cite{js} & ($\pi$) & (0.9) & (0.5) & (0.5) & (0.1) \\
			Poly-1 & ($\epsilon$) & (10) & (2) & - & - \\
			\midrule
			DAL & ($q_s$, $q_e$, $\lambda_e$) & (0.8, 1.5, 1.0) & (0.6, 1.5, 1.0) & (0.6, 1.5, 1.0) & (0.4, 1.5, 1.0) \\
			\bottomrule
		\end{tabular}
	}
\end{table*}	

\textbf{Baselines} \quad We first compare DAL with several robust loss functions. These baselines include Generalized Cross Entropy (GCE)~\cite{gce}, Negative Learning for Noisy Labels (NLNL)~\cite{nlnl}, Symmetric Cross Entropy (SCE)~\cite{sce}, Bi-Tempered Logistic Loss (BTL)~\cite{btl}, Normalized Cross Entropy with Reverse Cross Entropy (NCE+RCE)~\cite{apl}, Taylor Cross Entropy (TCE)~\cite{tce}, Normalized Cross Entropy with Asymmetric Generalized Cross Entropy (NCE+AGCE)~\cite{agce}, Cross Entropy with Sparse Regularization (CE+SR)~\cite{sr}, Jensen-Shannon Divergence Loss (JS)~\cite{js} and Poly-1~\cite{poly}. Similar to DAL, they all require no additional information and incur no extra memory burden or computational costs.

In addition, for those more complicated approaches which cannot be compared with DAL directly, we verify that DAL can provide them with performance boosts, we integrate DAL with the following methods: early learning regularization (ELR)~\cite{elr}, generalized Jensen-Shannon Divergence (GJS)~\cite{js}, Co-teaching~\cite{coteaching}, JoCoR~\cite{jocor} and DivideMix~\cite{dividemix}.

\textbf{Experimental details} \quad When comparing DAL with robust loss functions, we use a single shared learning setup for different methods for fair comparison. For CIFAR and Animal-10N, we train a ResNet18~\cite{resnet} (the first 7×7 Conv of stride 2 is replaced with 3×3 Conv of stride 1, and the first max pooling layer is also removed for CIFAR.)\footnote{We also provide performance on other DNNs in the Appendix.} using SGD for 150 epochs with momentum 0.9, weight decay $10^{-4}$, batch size 128, initial learning rate 0.01, and cosine learning rate annealing. We also apply typical data augmentations including random crop and horizontal flip. For Webvision, we train a ResNet50~\cite{resnet} using SGD for 250 epochs with nesterov momentum 0.9, weight decay $3 \times 10^{-5}$, batch size 512, and initial learning rate 0.4. The learning rate is multiplied by 0.97 after each epoch. Typical data augmentations including random crop, color jittering, and horizontal flip are applied to both Webvision. When integrating DAL with other approaches, for each approach we use the training setup reported in official codes except that we set the number of epochs to 200 for DivideMix and GJS for convenience. Besides, we use a single set of hyper-parameters $(\pi_1, \pi_2, \pi_3)=(0.3, 0.35, 0.35)$ for GJS rather than tune hyper-parameters elaborately for different label noise. We provide our source codes in https://github.com/XiuchuanLi/DAL.

\subsection{Comparison with Robust Losses} \label{sec:comparison}

\begin{table}[t] %\tiny
	\caption{Test accuracies (\%) on CIFAR-10 with label noise. The results (mean±std) are reported over 3 random run. The best results are boldfaced and underlined, the second best results are boldfaced, while the third best results are underlined.}
	\vskip 0.05in
	\label{tab_cifar10}
	\setlength\tabcolsep{7pt}
	\centering
	\renewcommand{\arraystretch}{1.0}
	\scalebox{0.9}{
		\begin{tabular}{lcccccccc}
			\toprule
			\multirow{2}{*}{Method} & \multicolumn{4}{c}{Symmetric} & \multicolumn{2}{c}{Asymmetric} & \multicolumn{2}{c}{Instance-dependent} \\
			\cmidrule(lr){2-5} \cmidrule(lr){6-7} \cmidrule(lr){8-9}
			& 20\% & 40\% & 60\% & 80\% & 20\% & 40\% & 20\% & 40\% \\
			\midrule
			CE & 83.30\scriptsize±0.19 & 67.85\scriptsize±0.53 & 47.79\scriptsize±0.42 & 25.80\scriptsize±0.20 & 86.00\scriptsize±0.15 & 74.98\scriptsize±0.09 & 80.86\scriptsize±0.11 & 61.40\scriptsize±0.28 \\
			GCE~\cite{gce} & 90.68\scriptsize±0.08 & 87.33\scriptsize±0.15 & 81.29\scriptsize±0.28 & 61.93\scriptsize±0.24 & 88.96\scriptsize±0.15 & 58.79\scriptsize±0.14 & \underline{89.40\scriptsize±0.17} & \underline{81.21\scriptsize±0.13} \\
			SCE~\cite{sce} & 89.16\scriptsize±0.32 & 85.41\scriptsize±0.22 & 77.81\scriptsize±0.53 & 51.46\scriptsize±1.92 & 87.71\scriptsize±0.41 & \textbf{77.40\scriptsize±0.13} & 88.11\scriptsize±0.17 & 77.80\scriptsize±0.74 \\
			NLNL~\cite{nlnl} & 82.18\scriptsize±0.44 & 73.25\scriptsize±0.23 & 60.79\scriptsize±0.96 & 43.46\scriptsize±0.16 & 85.23\scriptsize±0.22 & \underline{\textbf{81.91\scriptsize±0.22}} & 81.77\scriptsize±0.42 & 69.27\scriptsize±0.30 \\
			BTL~\cite{btl} & 89.93\scriptsize±0.30 & 78.22\scriptsize±0.24 & 58.00\scriptsize±0.20 & 29.01\scriptsize±0.50 & 86.51\scriptsize±0.14 & 74.58\scriptsize±0.43 & 86.58\scriptsize±0.41 & 64.72\scriptsize±0.74 \\
			NCE+RCE~\cite{apl} & 90.70\scriptsize±0.03 & 87.02\scriptsize±0.11 & 81.16\scriptsize±0.16 & 64.62\scriptsize±0.18 & 88.51\scriptsize±0.13 & \underline{76.42\scriptsize±0.23} & 88.92\scriptsize±0.13 & 77.80\scriptsize±0.50 \\
			TCE~\cite{tce} & 90.50\scriptsize±0.11 & 86.30\scriptsize±0.24 & 77.42\scriptsize±0.13 & 46.91\scriptsize±0.08 & 87.87\scriptsize±0.21 & 58.07\scriptsize±0.28 & 88.88\scriptsize±0.18 & 73.15\scriptsize±0.26 \\
			NCE+AGCE~\cite{agce} & 90.60\scriptsize±0.13 & 87.28\scriptsize±0.32 & 81.34\scriptsize±0.04 & 66.50\scriptsize±0.73 & \underline{88.99\scriptsize±0.08} & 76.65\scriptsize±0.45 & 89.10\scriptsize±0.13 & 78.77\scriptsize±0.18 \\
			CE+SR~\cite{sr} & \underline{\textbf{91.39\scriptsize±0.14}} & \underline{87.82\scriptsize±0.30} & \underline{82.71\scriptsize±0.34} & \textbf{69.70\scriptsize±0.29} & 88.26\scriptsize±0.24 & 74.95\scriptsize±0.10 & 88.28\scriptsize±0.49 & 68.86\scriptsize±0.63 \\
			JS~\cite{js} & 90.62\scriptsize±0.06 & 86.28\scriptsize±0.28 & 77.04\scriptsize±0.54 & 43.04\scriptsize±0.34 & 88.30\scriptsize±0.16 & 68.73\scriptsize±0.50 & 88.27\scriptsize±0.56 & 73.92\scriptsize±0.33 \\
			Poly-1~\cite{poly} & 84.89\scriptsize±0.08 & 68.25\scriptsize±0.67 & 47.87\scriptsize±0.56 & 25.14\scriptsize±0.32 & 86.19\scriptsize±0.20 & 75.52\scriptsize±0.42 & 81.37\scriptsize±0.23 & 61.50\scriptsize±0.60 \\
			\hline
			DGCE & \underline{90.92\scriptsize±0.13} & \textbf{87.88\scriptsize±0.15} & \textbf{82.97\scriptsize±0.16} & \underline{66.18\scriptsize±0.26} & \textbf{89.66\scriptsize±0.23} & 72.16\scriptsize±0.11 & \textbf{89.78\scriptsize±0.10} & \textbf{82.99\scriptsize±0.33} \\
			DAL & \textbf{90.95\scriptsize±0.09} & \underline{\textbf{88.73\scriptsize±0.10}} & \underline{\textbf{84.28\scriptsize±0.16}} & \underline{\textbf{72.51\scriptsize±0.20}} & \underline{\textbf{90.08\scriptsize±0.10}} & 73.34\scriptsize±0.18 & \underline{\textbf{90.33\scriptsize±0.08}} & \underline{\textbf{85.04\scriptsize±0.08}} \\
			\bottomrule
		\end{tabular}
	}
\end{table}

\begin{table}[t] %\tiny
	\caption{Test accuracies (\%) on CIFAR-100 with label noise. The results (mean±std) are reported over 3 random run. The best results are boldfaced and underlined, the second best results are boldfaced, while the third best results are underlined.}
	\vskip 0.05in
	\label{tab_cifar100}
	\setlength\tabcolsep{7pt}
	\centering
	\renewcommand{\arraystretch}{1.0}
	\scalebox{0.9}{
		\begin{tabular}{lcccccccc}
			\toprule
			\multirow{2}{*}{Method} & \multicolumn{4}{c}{Symmetric} & \multicolumn{2}{c}{Asymmetric} & \multicolumn{2}{c}{Instance-dependent} \\
			\cmidrule(lr){2-5} \cmidrule(lr){6-7} \cmidrule(lr){8-9}
			& 20\% & 40\% & 60\% & 80\% & 20\% & 40\% & 20\% & 40\% \\
			\midrule
			CE & 60.41\scriptsize±0.19 & 43.78\scriptsize±0.44 & 25.03\scriptsize±0.13 & 8.45\scriptsize±0.32 & 60.89\scriptsize±0.25 & 43.74\scriptsize±0.51 & 60.73\scriptsize±0.36 & 45.31\scriptsize±0.36 \\
			GCE~\cite{gce} & 68.37\scriptsize±0.27 & 61.94\scriptsize±0.44 & 49.91\scriptsize±0.25 & 22.22\scriptsize±0.51 & 62.21\scriptsize±0.32 & 41.19\scriptsize±0.89 & \underline{67.22\scriptsize±0.28} & 54.37\scriptsize±0.29 \\
			SCE~\cite{sce} & 58.66\scriptsize±0.55 & 42.98\scriptsize±0.26 & 24.86\scriptsize±0.32 & 8.56\scriptsize±0.16 & 59.70\scriptsize±0.32 & 43.06\scriptsize±0.16 & 59.36\scriptsize±0.27 & 44.19\scriptsize±0.16 \\
			NLNL~\cite{nlnl} & 57.89\scriptsize±0.66 & 44.05\scriptsize±0.47 & 26.14\scriptsize±0.22 & 11.80\scriptsize±0.33 & 57.33\scriptsize±0.20 & 38.82\scriptsize±0.14 & 57.23\scriptsize±0.55 & 42.96\scriptsize±0.46 \\
			BTL~\cite{btl} & 61.83\scriptsize±0.13 & 47.54\scriptsize±0.16 & 30.47\scriptsize±0.54 & 13.73\scriptsize±0.38 & 58.70\scriptsize±0.14 & 42.91\scriptsize±0.32 & 59.31\scriptsize±0.40 & 44.18\scriptsize±0.44 \\
			NCE+RCE~\cite{apl} & 67.47\scriptsize±0.14 & 61.27\scriptsize±0.16 & \underline{51.02\scriptsize±0.12} & \underline{25.60\scriptsize±0.28} & 63.17\scriptsize±0.26 & 43.47\scriptsize±0.43 & 65.83\scriptsize±0.30 & 54.21\scriptsize±0.48 \\
			TCE~\cite{tce} & 63.97\scriptsize±0.65 & 57.40\scriptsize±0.63 & 41.46\scriptsize±0.67 & 15.12\scriptsize±0.27 & 54.97\scriptsize±0.49 & 39.73\scriptsize±0.19 & 59.42\scriptsize±0.40 & 36.48\scriptsize±1.38 \\
			NCE+AGCE~\cite{agce} & 67.06\scriptsize±0.22 & 60.93\scriptsize±0.17 & 49.09\scriptsize±0.35 & 20.10\scriptsize±0.61 & \underline{64.87\scriptsize±0.29} & \underline{\textbf{46.87\scriptsize±0.28}} & 66.38\scriptsize±0.46 & \underline{56.35\scriptsize±0.10} \\
			CE+SR~\cite{sr} & \underline{68.84\scriptsize±0.23} & \underline{62.03\scriptsize±0.39} & 50.28\scriptsize±0.25 & 10.75\scriptsize±0.20 & 59.16\scriptsize±0.62 & 41.80\scriptsize±0.23 & 63.19\scriptsize±0.06 & 47.45\scriptsize±0.51 \\
			JS~\cite{js} & 67.58\scriptsize±0.52 & 61.01\scriptsize±0.31 & 47.95\scriptsize±0.38 & 20.03\scriptsize±0.27 & 59.67\scriptsize±0.89 & 41.23\scriptsize±0.50 & 65.31\scriptsize±0.41 & 49.12\scriptsize±1.11 \\
			Poly-1~\cite{poly} & 60.13\scriptsize±0.16 & 44.20\scriptsize±0.65 & 25.84\scriptsize±0.15 & 8.44\scriptsize±0.27 & 60.81\scriptsize±0.22 & 43.63\scriptsize±0.60 & 60.76\scriptsize±0.45 & 45.65\scriptsize±0.03 \\
			\hline
			DGCE  & \textbf{68.89\scriptsize±0.26} & \textbf{63.71\scriptsize±0.19} & \textbf{53.97\scriptsize±0.44} & \textbf{30.90\scriptsize±0.44} & \textbf{65.80\scriptsize±0.15} & \underline{44.87\scriptsize±0.14} & \textbf{68.01\scriptsize±0.13} & \textbf{58.92\scriptsize±0.37} \\
			DAL & \underline{\textbf{69.00\scriptsize±0.09}} & \underline{\textbf{64.80\scriptsize±0.17}} & \underline{\textbf{56.72\scriptsize±0.27}} & \underline{\textbf{34.33\scriptsize±0.61}} & \underline{\textbf{66.60\scriptsize±0.04}} & \textbf{45.36\scriptsize±0.42} & \underline{\textbf{68.27\scriptsize±0.20}} & \underline{\textbf{60.38\scriptsize±0.54}} \\
			\bottomrule
		\end{tabular}
	}
\end{table}

\begin{table}[t] \small
	\caption{Test accuracies (\%) on real-word noisy datasets Animal-10N and WebVision. The best results are boldfaced and underlined, the second best results are boldfaced, while the third best results are underlined.}
	\vskip 0.05in
	\centering
	\renewcommand{\arraystretch}{0.9}
	\setlength\tabcolsep{9pt}
	\label{tab_real}
	\scalebox{0.9}{
		\begin{tabular}{ccccccccc}
			\toprule
			Dataset & CE & SCE~\cite{sce} & NCE+RCE~\cite{apl} & NCE+AGCE~\cite{agce} & CE+SR~\cite{sr} & JS~\cite{js} & DGCE & DAL \\
			\midrule
			Animal-10N & 80.60 & 81.62 & 80.92 & 81.40 & \underline{81.82} & 81.14 & \textbf{82.62} & \underline{\textbf{82.66}} \\
			WebVision & 64.04 & 65.60 & 64.64 & \underline{68.88} & 68.72 & 64.96 & \textbf{69.92} & \underline{\textbf{70.04}} \\
			\bottomrule
		\end{tabular}
	}
\end{table}   

We randomly select 10\% examples from the noisy training set as the validation set for hyper-parameter tuning, then use the best hyper-parameters to train DNNs on the full training set. The optimal hyper-parameters of robust loss functions in our setup are summarized in Table~\ref{tab_hyperparam}. The experimental results on CIFAR-10 and CIFAR-100 are respectively shown in Table~\ref{tab_cifar10} and Table~\ref{tab_cifar100}. On CIFAR-10, DAL achieves the best performance in 6 settings and DGCE achieves the second best performance in 5 settings. On CIFAR-100, DAL reaches the highest accuracy in 7 settings and DGCE also reaches the second highest accuracy in 7 setting. Overall, the performance gaps between DAL and DGCE and that between DAL and any other baseline are larger with higher noise rate. The former is because underfitting is more serious with higher noise rate as stated in Section~\ref{sec:dal}, in which case BS provides substantial performance boosts. The latter is consistent with Remark~\ref{remark:2} which states that the superiority of DAL over GCE is more remarkable with higher noise rate.

The comparison on real-world noisy datasets is shown in Table~\ref{tab_real}, where both DAL and DGCE also overtake baselines. In both datasets, DAL is only slightly better than DGCE and outperforms other baselines by a relatively small margin. This is because the noise rate of Animal-10N is only about 8\%~\cite{animal10n} and that of Webvision is only about 20\%~\cite{webvision}.

\subsection{Sensitivity Analyses} \label{sec:sensitivity}
We perform sensitivity analyses for DAL and report the results in Figure~\ref{fig_sensitivity}, we also provide the results of GCE for comparison. Obviously, DAL is far more insensitive to $q_s$. For example, on CIFAR-10 with 40\% instance-dependent noise, Varying $q_s$ of DAL in [0.75, 0.85] only leads to at most about 1\% performance fluctuation while varying $q$ of GCE in [0.85, 0.95] results in 8\% performance gap. Besides, we also notice that DAL with inferior $q_s$ always outperforms GCE with optimal $q$. 

As stated above, for convenience we always set $q_e$ to 1.5 and $\lambda_e$ to 1.0 and only tune $q_s$. In fact, setting $q_e$ or $\lambda_e$ to other value leads to similar performance. Fixing $q_s$=0.6, we show the performance of DAL with different $q_e$ or $\lambda_e$ in Figure~\ref{fig_sensitivity}. Obviously, increasing or decreasing $q_e$ or $\lambda_e$ substantially (by 0.3 or 0.5) derives only less than 1\% performance fluctuation.

\begin{figure}[!htbp]
	\centering
	\includegraphics[width= \textwidth]{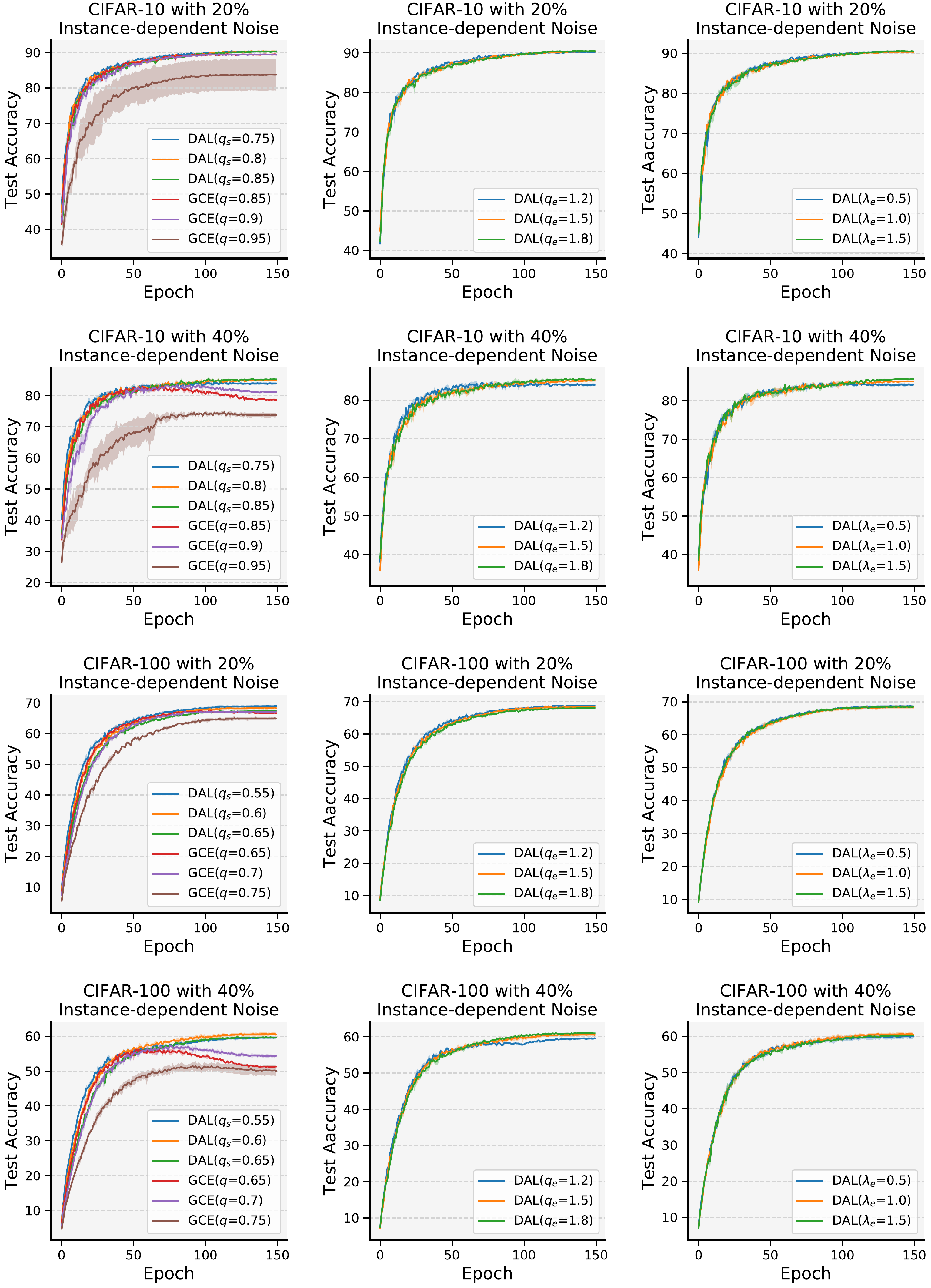}
	\caption{Sensittivity analysis with instance-dependent noise.}
	\label{fig_sensitivity}
\end{figure}

\begin{figure}[!htbp]
	\centering
	\includegraphics[width= \textwidth]{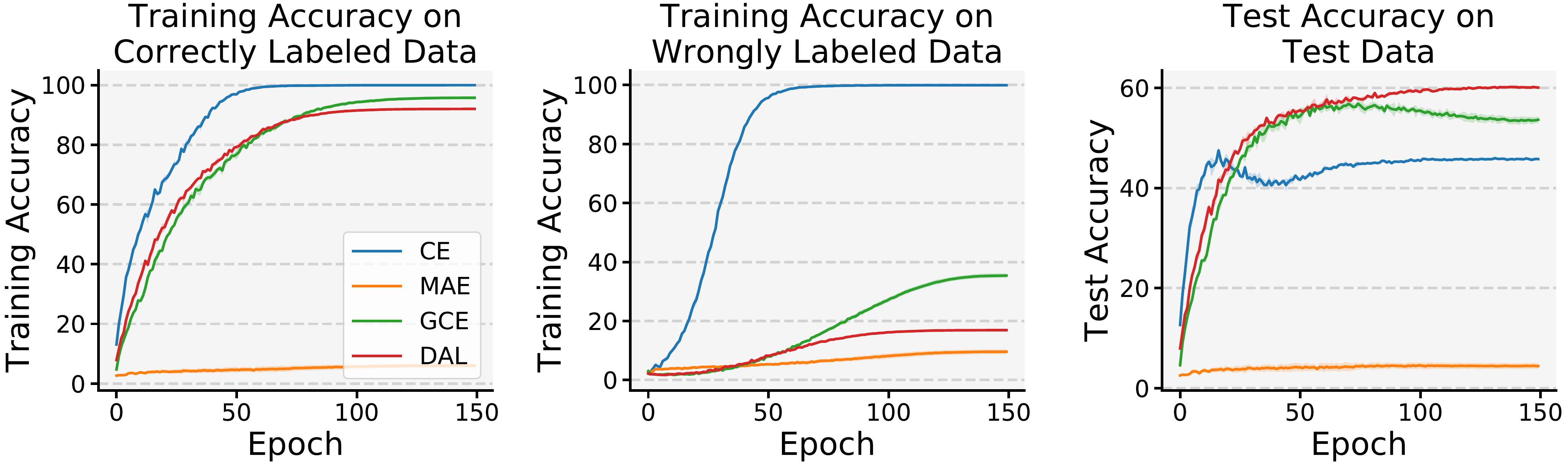}
	\caption{Training process on CIFAR-100 with 40\% instance-dependent noise.}
	\label{fig_process}
\end{figure}

\subsection{Case study} \label{sec:case}
We delve deep into the training process with different losses. In Figure~\ref{fig_process}, besides the test accuracy, we also display the training accuracy on both correctly labeled examples and wrongly labeled examples during the training process. The latter is calculated based on the provided labels rather than the ground truth. The model trained with CE can achieve 100\% training accuracy on both correctly and wrongly labeled data, indicating that it overfits label noise. In contrast, the training accuracy of DNNs trained with MAE is rather low ($<$10\%) on both correctly and wrongly labeled data, which means that it suffers from serious underfitting, leading to a quite poor generalization. By the way, we also observe that the training accuracy on correctly labeled data increases fast at the outset while that on wrongly labeled data rises slowly initially and grows quickly after several epochs, which is consistent with the dynamics of DNNs learning with label noise.

On the one hand, The model trained with GCE can achieve high training accuracy (compared to CE) on correctly labeled data finally while the figure on wrongly labeled data is relatively low (compared to MAE), so it outperforms DNNs trained with both CE and MAE on test data. On the other hand, its training accuracy on wrongly labeled data still rises remarkably at the late stage, resulting in a decrease of generalization.

As for DNNs trained with DAL, since DAL provides DNNs with strong fitting ability at the early stage when DNNs tend to learn beneficial patterns, both the training accuracy on correctly labeled data and test accuracy rise rapidly. Afterwards, as DAL gradually improves robustness, it prevents DNNs from overfitting label noise at the later stage, so the training accuracy on wrongly labeled data remains at a low level and the test accuracy grows steadily throughout the whole training process.

\begin{table}[t] %\tiny
	\caption{Performance before and after intergrating DAL on CIFAR-100.}
	\vskip 0.05in
	\label{tab_integrate}
	\setlength\tabcolsep{6.0pt}
	\centering
	\renewcommand{\arraystretch}{0.85}
	\scalebox{0.86}{
		\begin{tabular}{lcccccccc}
			\toprule
			\multirow{2}{*}{Method} & \multicolumn{4}{c}{Symmetric} & \multicolumn{2}{c}{Asymmetric} & \multicolumn{2}{c}{Instance} \\
			\cmidrule(lr){2-5} \cmidrule(lr){6-7} \cmidrule(lr){8-9}
			& 20\% & 40\% & 60\% & 80\% & 20\% & 40\% & 20\% & 40\% \\
			\midrule
			TCE~\cite{tce} & 63.97 & 57.40 & 41.46 & 15.12 & 54.97 & 39.73 & 59.42 & 36.48 \\
			+DAL & 64.41\scriptsize(\textcolor{red}{+0.44}) & 60.38\scriptsize(\textcolor{red}{+2.98}) & 47.31\scriptsize(\textcolor{red}{+5.85}) & 20.05\scriptsize(\textcolor{red}{+4.93}) & 56.00\scriptsize(\textcolor{red}{+1.03}) & 40.96\scriptsize(\textcolor{red}{+1.23}) & 61.55\scriptsize(\textcolor{red}{+2.13}) & 41.68\scriptsize(\textcolor{red}{+5.20}) \\
			\midrule
			JS~\cite{js} & 67.58 & 61.01 & 47.95 & 20.03 & 59.67 & 41.23 & 65.31 & 49.12 \\
			+DAL & 68.28\scriptsize(\textcolor{red}{+0.70}) & 62.82\scriptsize(\textcolor{red}{+1.71}) & 51.08\scriptsize(\textcolor{red}{+3.13}) & 24.03\scriptsize(\textcolor{red}{+4.00}) & 61.49\scriptsize(\textcolor{red}{+1.82}) & 41.64\scriptsize(\textcolor{red}{+0.41}) & 66.70\scriptsize(\textcolor{red}{+1.39}) & 51.70\scriptsize(\textcolor{red}{+2.58}) \\
			\midrule
			ELR~\cite{elr} & 72.64 & 69.53 & 63.33 & 28.55 & 74.01 & 68.12 & 73.62 & 71.86 \\
			+DAL & 73.64\scriptsize(\textcolor{red}{+1.00}) & 70.94(\scriptsize\textcolor{red}{+1.41}) & 65.91(\scriptsize\textcolor{red}{+2.58}) & 34.16(\scriptsize\textcolor{red}{+5.61}) & 73.83(\scriptsize\textcolor{olive}{-0.18}) & 67.22(\scriptsize\textcolor{olive}{-0.90}) & 73.95(\scriptsize\textcolor{red}{+0.33}) & 71.56(\scriptsize\textcolor{olive}{-0.30}) \\
			\midrule
			GJS~\cite{js} & 75.91 & 72.40 & 62.83 & 33.16 & 74.01 & 56.33 & 75.53 & 68.00 \\
			+DAL & 76.03\scriptsize(\textcolor{red}{+0.12}) & 72.88\scriptsize(\textcolor{red}{+0.44}) & 64.70\scriptsize(\textcolor{red}{+1.87}) & 38.98\scriptsize(\textcolor{red}{+5.82}) & 75.42\scriptsize(\textcolor{red}{+1.41}) & 61.73\scriptsize(\textcolor{red}{+5.40}) & 76.17\scriptsize(\textcolor{red}{+0.64}) & 71.47\scriptsize(\textcolor{red}{+3.47}) \\
			\midrule
			Co-teaching~\cite{coteaching} & 66.07 & 59.83 & 48.47 & 20.21 & 64.01 & 46.67 & 65.24 & 53.11 \\
			+DAL & 67.34\scriptsize(\textcolor{red}{+1.27}) & 60.15\scriptsize(\textcolor{red}{+0.32}) & 51.04\scriptsize(\textcolor{red}{+2.57}) & 26.16\scriptsize(\textcolor{red}{+5.95}) & 63.81\scriptsize(\textcolor{olive}{-0.20}) & 47.36\scriptsize(\textcolor{red}{+0.69}) & 64.79\scriptsize(\textcolor{olive}{-0.45}) & 54.73\scriptsize(\textcolor{red}{+1.62}) \\
			\midrule
			JoCoR~\cite{jocor} & 63.29 & 59.50 & 51.60 & 27.94 & 56.91 & 40.60 & 61.25 & 51.58 \\
			+DAL & 64.70\scriptsize(\textcolor{red}{+1.41}) & 60.36\scriptsize(\textcolor{red}{+0.86}) & 53.07\scriptsize(\textcolor{red}{+1.47}) & 31.86\scriptsize(\textcolor{red}{+3.92}) & 59.12\scriptsize(\textcolor{red}{+2.21}) & 41.90\scriptsize(\textcolor{red}{+1.30}) & 62.05\scriptsize(\textcolor{red}{+0.80}) & 51.81\scriptsize(\textcolor{red}{+0.23}) \\
			\midrule
			DivideMix~\cite{dividemix} & 76.23 & 74.11 & 69.23 & 52.38 & 75.18 & 52.86 & 76.10 & 68.51 \\
			+DAL & 77.07\scriptsize(\textcolor{red}{+0.84}) & 74.68\scriptsize(\textcolor{red}{+0.57}) & 70.38\scriptsize(\textcolor{red}{+1.15}) & 54.91\scriptsize(\textcolor{red}{+2.53}) & 75.42\scriptsize(\textcolor{red}{+0.24}) & 54.60\scriptsize(\textcolor{red}{+1.74}) & 75.97\scriptsize(\textcolor{olive}{-0.18}) & 68.93\scriptsize(\textcolor{red}{+0.42}) \\
			
			\bottomrule
		\end{tabular}
	}
\end{table}

\subsection{Integrating DAL with Other Approaches} \label{sec:integration}
In this section, we integrate our proposed DAL with other robust algorithms for learning with label noise. As we stated in Section~\ref{sec:gradient}, besides GCE, TCE and JS can also be seen as interpolations between CE and MAE. TCE is equivalent to CE when $t \rightarrow +\infty$ and MAE when $t=1$, JS equals CE when $\pi_1 \rightarrow 0$ and MAE when $\pi_1 \rightarrow 1$. For TCE we let $t$ decrease from 20 to 1, for JS we let $\pi_1$ increase from 0 to 1. On the one hand, the dynamic interpolation provides both TCE and JS with remarkable performance gains. On the other hand, unlike GCE which can put more emphasis on easy examples than hard ones by extending the range of its hyper-parameter, JS and TCE always pay more attention to hard examples, so the improved TCE and JS are both inferior to DGCE or DAL.

Besides robust losses incurring no extra memory burden or computational cost, we also integrate our proposed DAL with other more sophisticated algorithms. ELR~\cite{elr} is a combination of CE and early learning regularizer, the latter must maintain a label distribution for each sample, which increases memory burden substantially when the class space is large. GJS~\cite{js} is a combination of JS and consistency regularizer, the latter requires strong data augmentation and doubles the back-and-forth computational cost. We replace CE in ELR and JS in GJS with DAL. Co-teaching~\cite{coteaching} and JoCoR~\cite{jocor} are both sample selection methods. They maintain two DNNs with the same architecture simultaneously who select potentially correct samples for each other. We replace CE in both methods with DAL.  DivideMix~\cite{dividemix} is a quite sophisticated method which employs semi-supervised learning. It has a warmup process at the outset, we replace CE with DAL during this period, then use examples whose predictions agree with their labels to train DNNs with CE for another several epochs. We set $(q_s, q_e)$ to (0.0, 1.0) for all approaches. Since DAL is combined with more powerful techniques such as sample selection and label correction, the labels gradually become far more accurate than the provided ones, in this case $q_e > 1$ degenerates performance due to underfitting. It is clear from Table~\ref{tab_integrate} that DAL can provide different methods with performance boosts consistently even if we do not tune $(q_s,q_e)$ elaborately. Especially with higher noise rate, the performance boosts are more remarkable. For instance, under 80\% symmetric noise, DAL improves Co-teaching by about 6\% and even provides the sophisticated DivideMix with about 3\% performance gain without requiring any extra information or resources.

\subsection{Robustness against Backdoor Attacks} \label{sec:backdoor}

\begin{figure}[t]
	\centering
	\includegraphics[width= 0.8 \textwidth]{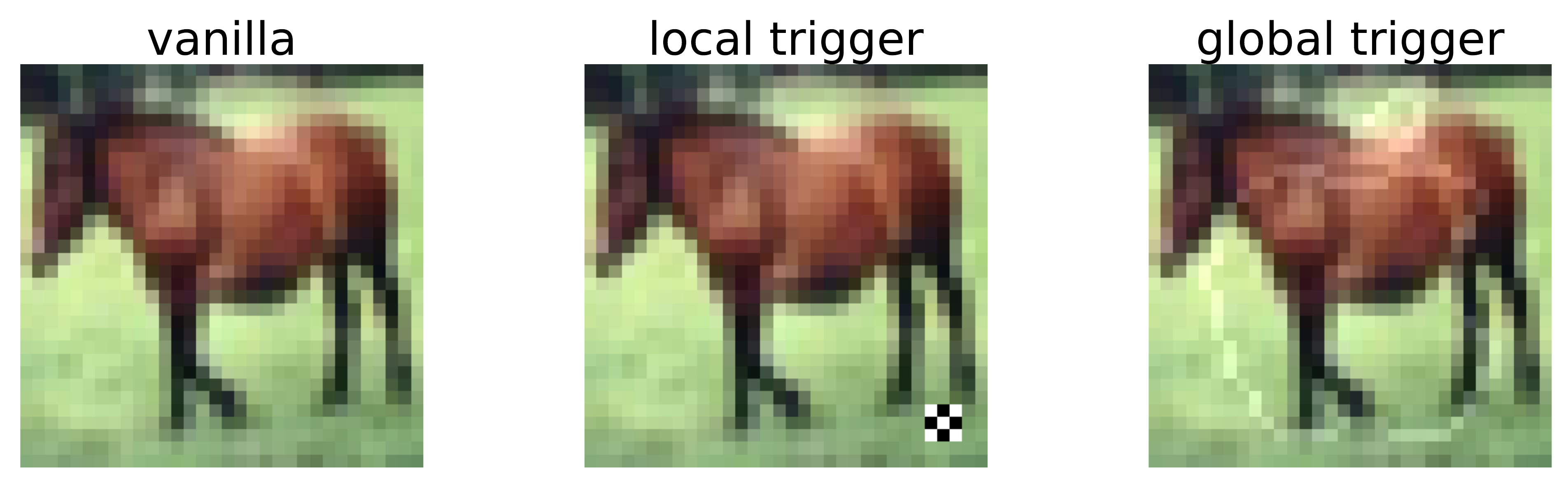}
	\caption{Illustration of backdoor triggers.}
	\label{fig_trigger}
\end{figure}

\begin{table}[t]
	\caption{Backdoor Attack on DNNs trained with different loss functions. The results (mean±std) are reported over 3 random run.}
	\vskip 0.05in
	\label{tab_backdoor}
	\setlength\tabcolsep{4.0pt}
	\centering
	\renewcommand{\arraystretch}{1}
	\scalebox{0.9}{
		\begin{tabular}{llcccc}
			\toprule
			\multirow{2}{*}{Dataset} & \multirow{2}{*}{Method} & \multicolumn{2}{c}{Local Trigger} & \multicolumn{2}{c}{Global Trigger} \\
			\cmidrule(lr){3-4} \cmidrule(lr){5-6}
			& & Test Accuracy & Success Rate & Test Accuracy & Success Rate \\
			\midrule
			\multirow{3}{*}{CIFAR-10} & CE & 93.39\scriptsize±0.13 & 99.91\scriptsize±0.03 & 93.46\scriptsize±0.10 & 68.16\scriptsize±2.63 \\
			& GCE & 92.41\scriptsize±0.13 & 0.68\scriptsize±0.71 & 92.37\scriptsize±0.07 & 38.55\scriptsize±2.35 \\
			& DAL & 92.42\scriptsize±0.22 & 0.23\scriptsize±0.05 & 92.33\scriptsize±0.16 & 29.50\scriptsize±1.84 \\
			\midrule
			\multirow{3}{*}{CIFAR-100} & CE & 73.95\scriptsize±0.16 & 96.07\scriptsize±0.81 & 73.99\scriptsize±0.07 & 49.36\scriptsize±2.09 \\
			& GCE & 71.32\scriptsize±0.20 & 98.84\scriptsize±0.73 & 71.81\scriptsize±0.17 & 29.45\scriptsize±4.35 \\
			& DAL & 72.06\scriptsize±0.26 & 0.40\scriptsize±0.35 & 72.18\scriptsize±0.10 & 16.95\scriptsize±7.75 \\
			\bottomrule
		\end{tabular}
	}
\end{table}
Besides label noise, we observe that DAL also helps to improve robustness against backdoor attacks. Typical backdoor attacks~\cite{backdoor1} inject triggers into a small part of training examples and change their labels into a specific target class, such that the target model performs well on benign samples whereas consistently classifies any input containing the backdoor trigger into the target class. Following~\cite{trigger}, we use two types of backdoor triggers as shown in Figure~\ref{fig_trigger}, which are added into 100 randomly selected training examples whose labels are converted into the target class. We first feed clean test samples into the target model to calculate the test accuracy, then remove test samples which are classified into the target class and add triggers into all remained test images to compute the backdoor success rate.

As shown in Table~\ref{tab_backdoor}, test accuracies of DNNs trained with different losses are comparable to each other. However, DAL exhibits stronger robustness against backdoor attacks than both CE and GCE. On CIFAR-100, local triggers yield almost 100\% success rate on DNNs trained with CE and GCE while 0\% success rate on that trained with DAL. Based on the dynamics of DNNs, they firstly learn patterns shared by most training examples and eventually memorize the correlation between backdoor triggers and target class because the poisoned examples only account for a small proportion (0.2\%). Since DAL reduces fitting ability gradually, it also helps to improve backdoor robustness.

\section{Conclusion} \label{sec_conclusion}
In this paper, we propose Dynamics-Aware Loss (DAL) to address the discrepancy between the static robust loss functions and the dynamics of DNNs learning with label noise. At the early stage, since DNNs tend to learn beneficial patterns, DAL prioritizes fitting ability to achieve high test accuracy quickly. Subsequently, as DNNs overfit label noise gradually, DAL improves the weight of robustness thereafter. Moreover, DAL puts more emphasis on easy examples than hard ones and introduces a bootstrapping term at the later stage to further combat label noise. We provide detailed theoretical analyses and extensive experimental results to demonstrate the superiority of DAL over existing robust loss functions. To the best of our knowledge, DAL is the first robust loss function that takes DNNs' dynamics into account, which both has theoretical guarantee and achieves remarkable performance. It serves as a simple and strong baseline for learning with label noise.

The main limitation of DAL is that it is not flexible enough. Specifically, DAL
lets $q$ increase from $q_s$ to $q_e$ which are both
manually assigned rather than adjusts $q$ adaptively according to
DNNs' learning status during the training process. The challenge lies in the
lack of a good quantitative indicator reflecting DNNs' learning status. The
local intrinsic dimensionality has been used in~\cite{dynamic2}, but it can
only characterize DNNs' behavior on a specific training example and estimating
the local intrinsic dimensionality for each training example incurs great
computational cost. Besides designing more flexible robust losses, this work
could also be extended in another direction. In this paper we only empirically
verify that DAL helps to improve backdoor robustness, but learning with label
noise also shares some similarities with other types of weakly supervised
learning. For example, partial-label learning~\cite{partial} assumes that each
example is assigned with a set of candidate labels, of which only one is the
ground truth while others are wrong labels. Therefore, the dynamics of DNNs
learning with label noise may also inspire more advanced algorithms handling
other weakly supervised problems.

\bibliography{mybibfile}
\bibliographystyle{unsrt}

\end{document}